\documentclass[journal]{article}
\usepackage{lineno}
\usepackage[utf8]{inputenc}
\usepackage{amsmath, amsfonts,amssymb,amsthm}
\pagenumbering{arabic}
\usepackage{algorithmicx}
\usepackage{algorithm}
\usepackage{algpseudocode}
\usepackage{booktabs}
\usepackage{graphics}
\usepackage{mathtools}
\usepackage{xcolor}
\usepackage{comment}
\usepackage{cases}
\usepackage{mathtools}
\usepackage{caption}
\usepackage{enumerate}

\newtheorem{theorem}{Theorem}
\newtheorem{lemma}{Lemma}
\newtheorem{definition}{Definition}

\DeclareMathOperator*{\bin}{Bin}
\DeclareMathOperator*{\prob}{Pr}

\newcommand{\probb}[1]{\Pr\left(#1\right)}
\newcommand{\todo}[1]{}
\newcommand{\N}{\mathbb{N}} % naturals
 % rationals
 % integers
\newcommand{\R}{\mathbb{R}} % reals
\newcommand{\ep}{\varepsilon}
\newcommand{\sspace}{\mathcal{X}} % search space
\newcommand{\extsspace}{\mathcal{Y}} % extended search space

\usepackage{xspace}
\newcommand{\substring}{\ensuremath{\textsc{SubString}_k}\xspace}
\newcommand{\leadingones}{\ensuremath{\textsc{LeadingOnes}_k}\xspace}

\newcommand{\pmut}{\ensuremath{p_{\text{mut}}}}

\newcommand{\A}[1]{A_{\geq #1}}
\newcommand{\expect}[1]{E\left[#1\right]}

\newcommand{\pinc}{\ensuremath{p_{\text{inc}}}}
\newcommand{\zz}{\tfrac{1-\zeta}{\alpha_0}}

\author{
  Brendan Case\\
  School of Computer Science\\and the Complex Systems Center\\
  University of Vermont, USA\\
  \tt{Brendan.Case@uvm.edu}\\
  \and
  Per~Kristian~Lehre\\
  School of Computer Science\\
  University of Birmingham, UK\\
  \tt{P.K.Lehre@cs.bham.ac.uk}
}

\begin{document}
\title{Self-adaptation in non-Elitist Evolutionary Algorithms on
   Discrete Problems\\with Unknown Structure}

\maketitle

\begin{abstract}
  A key challenge to make effective use of evolutionary algorithms is
  to choose appropriate settings for their parameters. However, the
  appropriate parameter setting generally depends on the structure of
  the optimisation problem, which is often unknown to the
  user. Non-deterministic parameter control mechanisms adjust
  parameters using information obtained from the evolutionary
  process. Self-adaptation -- where parameter settings are encoded in
  the chromosomes of individuals and evolve through mutation and
  crossover -- is a popular parameter control mechanism in evolutionary
  strategies. However, there is little theoretical evidence that
  self-adaptation is effective, and self-adaptation has largely been
  ignored by the discrete evolutionary computation community.

  Here we show through a theoretical runtime analysis that a non-elitist,
  discrete evolutionary algorithm which self-adapts its mutation rate
  not only outperforms EAs which use static mutation rates on
  \leadingones, but also improves
  asymptotically on an EA using a state-of-the-art control mechanism.
  The structure of this problem depends on a parameter $k$,
  which is \emph{a priori} unknown to the algorithm, and which is
  needed to appropriately set a fixed mutation rate. The self-adaptive
  EA achieves the same asymptotic runtime as if this parameter was known to the
  algorithm beforehand, which is an asymptotic speedup for this
  problem compared to all other EAs previously studied.  
  An experimental study of how the mutation-rates evolve show that
  they respond adequately to a diverse range of problem structures.

  These results suggest that self-adaptation should be adopted more
  broadly as a parameter control mechanism in discrete, non-elitist
  evolutionary algorithms.  
\end{abstract}

% \begin{IEEEkeywords}
%   Evolutionary Algorithm, Self-adaptation, Runtime Analysis, Level-based Analysis
% \end{IEEEkeywords}

\section{Introduction}
\label{sec:intro}

Evolutionary algorithms (EAs) have long been heralded for their easy
application to a vast array of real-world problems. In their earlier
years of study, two of the advantages which were often given were their
robustness to different parameter settings, such as mutation rate and
population size, and their effectiveness in domains where little is
known about the problem structure \cite{DeJong2006}. However, progress in the empirical
and theoretical study of EAs has shown many exceptions to these
statements. It is now known that even small changes to the basic parameters
of an EA can drastically increase the runtime on some problems
\cite{bib:Doerr2013, bib:Lehre2013}, and more recently
\cite{lengler2018}, and that hiding some aspects of the problem
structure from an EA can decrease performance \cite{bib:Cathabard2011,
  bib:Doerr2017, bib:Doerr2015,
  bib:Einarsson2018}.

A popular solution to overcoming these shortcomings is \emph{parameter
  tuning}, where the parameters are adjusted between runs of the
algorithm. Since the parameters remain fixed throughout the entire run
of the optimisation process under this scheme, this parameter scheme
is said to be \emph{static} \cite{bib:Eiben1999}. While the majority
of theoretical works have historically investigated static parameter
settings, a weakness of parameter tuning is that effective parameter
settings may depend on the current state of the search process.

An alternative approach is a \emph{dynamic} parameter scheme, which
has long been known to be advantageous compared to static parameter
choices in certain settings \cite{Droste2001, Wegener2005}, reviewed
in \cite{bib:Doerr2018e}. In contrast to parameter tuning, adjusting
parameters in this way is referred to as \emph{parameter control}
\cite{bib:Eiben1999}. Dynamic parameter control changes parameters of
the EA during its execution, usually depending on the EA's state in
the optimisation process or on time. While this can lead to provably
better % performances
performance, many theoretically-studied algorithms are
\textit{fitness-dependent}, meaning they set parameters according to
the given optimisation function. While important for understanding the
limits of parameter control, such control schemes are often ill-suited
for more general optimisation tasks or on problems where finding an
effective fitness-dependent parameter setting is impractical
\cite{bib:Doerr2018e}.
Thus, practitioners may find it challenging to transfer
theoretical results about fitness-dependent algorithms to an 
applied setting.

A more flexible way to dynamically adjust parameters is to use
feedback from the algorithm's recent performance. This
self-correcting, or \emph{adaptive} approach to parameter control has
been present in Evolutionary Strategies since their beginning with the
$1/5$-th rule; however, results concerning this kind of adjustment
have only recently been seen in the theoretical literature for discrete
EAs \cite{bib:Doerr2015c, Doerr2019, lassig2011}. The advantages of adjusting
parameters \emph{on the fly} in this way include a reduction in design
decisions compared to fitness-dependent algorithms, and the ability
for the same adaptive scheme to work well for a wider range of
optimisation problems \cite{bib:Karafotias2012}.

The adaptive parameter control scheme we consider employs
\emph{self-adaptation} of the EA's mutation rates, where mutation
rates are encoded into the genome of individual solutions.  With
self-adaptation, the mutation rate itself is mutated when an
individual undergoes mutation. As far as we know, within the theory of
discrete EAs there are only two existing studies of
self-adaptation. In \cite{bib:Dang2016}, a self-adapting population
using two mutation rates is shown to have a runtime (expected number
of fitness evaluations) of $O(n\lambda \log \lambda + n^2)$ on a
simple peak function, while the same algorithm using any fixed
mutation rate took $e^{\Omega(n)}$ evaluations with overwhelming
probability. Recently, Doerr et al. gave an example of a $(1,\lambda)$
EA using self-adaptation of mutation rates with expected runtime
$O(n\lambda/\log \lambda + n\log n)$ on \textsc{OneMax} when
$\lambda\geq (\ln n)^{1+\ep}$, an asymptotic speedup from the classic
$(1 + \lambda)$ EA \cite{bib:Doerr2018c}.  However, the former work
optimistically assumes one of the two available mutation rates are
appropriate for the given setting, so that any individual can easily
switch to an ideal mutation rate in a single step, and the latter only
keeps the mutation rate of the best individual after each generation,
which makes tracking the trajectory of mutation rates less difficult
than if there were multiple parents with different mutation rates.
Therefore, while both these algorithms were effective, these two results offer
only a preliminary theoretical understanding of the full range of
self-adaptive mechanisms. Further, the use of limited mutation rates
or a single parent is unrealistic in many real-world settings (i.e.,
where cross-over is frequently used). Thus, it remains an open
question whether a population-based EA can effectively adapt mutation
rate without these assumptions.

We answer this question in the affirmative, introducing an extension
of the $(\mu, \lambda)$ EA which uses self-adaptation of mutation
rates over a continuous interval (Algorithm \ref{algo:stepwise-sa}). In each generation, a new population
of $\lambda$ individuals is created by selecting among the $\mu$
individuals with highest fitness, ties broken according to higher
mutation rate. Each selected individual then multiplies its current
mutation rate by a factor of either $A> 1$ or $b\in(0,1)$, effectively
increasing or decreasing its mutation rate, before undergoing bitwise
mutation. To evaluate the capability of the EA to adapt its mutation rate, we
choose a problem where selecting the right mutation rate is critical,
and where the correct setting can be anywhere between a small constant
to $n/2$, where $n$ is the problem instance size. We show that when optimising $\textsc{LeadingOnes}_k$ the
self-adaptive algorithm has an expected runtime of $O(k^2)$ so long as
$\lambda = O(\log n)$ and $k\geq (\log n)^2$, which is the same
runtime as if $k$ were known. As discussed in more detail in Section
\ref{sec:opt-against-adversary}, this is a significant speedup
compared to an EA using a static choice of mutation rate, which can
only achieve $\Theta(nk)$ on $\textsc{LeadingOnes}_k$. This is also an
asymptotic speedup from the best-known runtime shown in
\cite{bib:Doerr2015}, and indeed is asymptotically optimal among all
unary unbiased black-box algorithms \cite{bib:Badkobeh2014}.

\subsection{Theory of Adaptive Parameter Control}
\label{sec:theory-param-control}

In the following summary of recent results in the theory of % adaptive
parameter control in EAs, % evolutionary algorithms
we use the language of
Eiben, Hinterding, and Michalewicz \cite{bib:Eiben1999} to distinguish
between types of parameter control. A parameter control scheme
is called \emph{deterministic} if it uses time or other predefined,
fitness-independent factors to adjust parameters, and \emph{adaptive}
if it changes parameters according to feedback from the optimisation
process. As further distinguished in \cite{bib:Doerr2015c}, a notable
distinction among adaptive algorithms is whether or not they are
\emph{fitness-dependent}, i.e. whether they directly use a particular
fitness function when choosing parameter settings. Adaptive algorithms
which are fitness-independent are either
\emph{self-adjusting}, where a global parameter is modified according
to a simple rule, or \emph{self-adaptive}, where the parameter is
encoded into the genome of an individual and modified through mutation.

For a comprehensive survey
of the theory of parameter control in discrete settings, we refer the
reader to Doerr and Doerr's recent review
\cite{bib:Doerr2018e}. We now highlight some 
themes from the theory of parameter control relevant to this paper.

\emph{Comparison of fitness-independent mechanisms to
  fitness-dependent ones:} Often, the best parameter settings have a
fitness-dependent expression which depends on the precise fitness
value of the search point at that time. While these settings are
typically problem-specific, there is an increasing number of
self-adjusting algorithms which are nearly as efficient, despite not
being tailored to a particular fitness landscape. A common strategy is
to first analyse the fitness-dependent case, followed by a
self-adjusting scheme which attempts to approximate the behaviour of
the fitness-dependent one.  For example, in \cite{doerr2015} a novel
$(1+(\lambda,\lambda))$ GA is shown to need only $\Theta(n)$ fitness
evaluations on \textsc{OneMax} when using a fitness-dependent
offspring size of $\lambda = \Theta(\sqrt{n/(n -
  \textsc{OM}(x))})$. This result is then extended using an adaptive
mechanism based on the $1/5$th rule, where a key element to proving
the algorithm's effectiveness is in demonstrating the adaptive GA's
offspring size $\lambda$ is quickly attracted to within a constant
factor of the fitness-dependent value \cite{Doerr2018}.  A similar
pattern of discovery occurred for the $(1+\lambda)$ EA on
\textsc{OneMax}. Badkobeh et al. first showed that a fitness-dependent
mutation rate led to an expected runtime of
$O\left(\frac{n\lambda}{\log\lambda} + n\log n\right)$, which is
asymptotically tight among all $\lambda$-parallel mutation-based
unbiased black-box algorithms and a speedup from the static-mutation
case \cite{bib:Badkobeh2014}. This was followed by
\cite{bib:Doerr2017b}, where a self-adjusting $(1+\lambda)$ EA is
shown to have the same asymptotic runtime when $\lambda = n^{O(1)}$,
and the aforementioned result for the self-adaptive $(1, \lambda)$ EA
\cite{bib:Doerr2018c}.  Again, it is shown the algorithm is able to
quickly find mutation rates close to the fitness-dependent values. The
mutation rates are shown to stay within this optimal range using
occupation bounds.

For \textsc{LeadingOnes}, it was first demonstrated by B\"ottcher et
al. in \cite{bib:Bottcher2010} that the bitflip probability
$1/(\textsc{LO}(x) + 1)$ led to an improved runtime of roughly
$0.68n^2$ for the $(1+1)$ EA on \textsc{LeadingOnes}.  Since then,
experimental results for the self-adjusting $(1+1)_{\alpha}$ EA
suggest the algorithm is able to closely approximate this value
\cite[Fig. 3]{bib:Doerr2018}, and hyper-parameters for the algorithm
have been found to yield the asymptotically optimal bound
$0.68n^2(1+o(1))$ \cite{Doerr2019}.

\emph{Interplay between the mutation rate and selective pressure in
  non-elitist EAs:} While adaptive parameter control has been studied
considerably less in non-elitist EAs, the critical balance between a
non-elitist EA's mutation rate and selective pressure (how much the
algorithm tends to select the top individuals in the population) takes
on new importance when using self-adaptation of mutation rates. In
\cite{bib:Lehre2012}, the linear-ranking EA is shown to optimise a
class of functions in a sub-exponential number of fitness evaluations
only when the selective pressure is in a narrow interval, proportional
to the mutation rate. A more general result for non-elitist EAs using
mutation rate $\chi/n$ is found in \cite[Corollary 1]{bib:Lehre2010},
where for a variety of selective mechanisms the lower bound
$\chi > \ln(\alpha_0) + \delta$ is given, where $\alpha_0$ is the
reproductive rate and $\delta\in(0,1)$ is a constant (the reproductive
rate is one measure of the selective pressure on an EA, see Definition
\ref{def:reproductive-rate}). If $\chi$ exceeds this bound, with
overwhelming probability any algorithm using this rate will have
exponential runtime on any function with a polynomial number of global
optima.  This negative result is extended in \cite[Theorem
2]{bib:Dang2016} to include non-elitist EAs which choose from a range
of $m$ different mutation rates by selecting mutation rate $\chi_i/n$
with probability $q_i$. Roughly, if
$\sum_{i=1}^mq_ie^{-\chi_i} \leq (1-\delta)/\alpha_0$, the algorithm
will be ineffective.

\subsection{Optimisation Against an Adversary}
\label{sec:opt-against-adversary}

We will analyse the performance of our algorithm on the
$\textsc{LeadingOnes}_k(x)$ function, which counts the number of
leading 1-bits, but only through the first $k$ bits and ignores the
rest of the bitstring:
\begin{definition}
  \label{def:klo-substring}
  For $x\in\{0,1\}^n$, and $1\leq k\leq n$,
  \begin{align*}
    \textsc{LO}_k(x) &:=: \textsc{LeadingOnes}_k(x) := \sum_{i=1}^k\prod_{j=1}^ix_j.
  \end{align*}
\end{definition}

The setting in which we consider this function is referred to as
\emph{optimisation against an adversary}. This can be viewed as an
extension of the traditional black-box optimisation setting, in which
the algorithm does not have access to the problem data or structure
and must learn only through evaluating candidate solutions. Framing
the study of EAs within the context of black-box optimisation, and its
corresponding black-box complexity theory, is of growing interest to
the theoretical community \cite{bib:Doerr2018b}.  Optimisation against
an adversary adds the additional constraint that the value $k$ is also
unavailable to the algorithm. That is, prior to each run of the
optimisation algorithm, an adversary chooses an integer $k\leq n$ and
the algorithm must optimise the resulting problem $f_k$. Effectively,
the adversary is able to choose some $f_k$ from a class of functions
parameterised by $k$, and the algorithm could have to solve any
problem from this class. Note that the adversary is not able to
actually permute any bits during optimisation, they only influence the
optimisation task through their choice of $k$. A similar problem was
first analysed by Cathabard et al. in \cite{bib:Cathabard2011}, along
with an analogous $\textsc{OneMax}_k$ function, though here $k$ was
sampled from a known distribution. The setting where $k$ corresponds
to an unknown initial number of bits which impact fitness has become
known as the \emph{initial segment uncertainty model}. The closely
related \emph{hidden subset problem}, which is analogous to the
initial segment model except the $k$ meaningful bits can be anywhere
in the bitstring, has also been studied for $\textsc{LeadingOnes}_k$
and $\textsc{OneMax}_k$ \cite{bib:Doerr2017, bib:Doerr2015,
  bib:Einarsson2018}. Since our algorithm always flips all bits with
equal probability during mutation, our results immediately extend to
this class of problems. Optimisation against an adversary further
generalises this terminology to contain any problem in which an
adversary can control the hidden problem structure through their
choice of $k$. For example, it includes the $\textsc{SubString}_k$
function introduced in Section \ref{sec:experiments}.

The addition of an adversary can be difficult for EAs with static
mutation rate due to the following phenomenon: consider a $(1+1)$ EA
using constant mutation probability $p$, and suppose we are attempting
to optimise $\textsc{LeadingOnes}_k$ against an adversary. If $k$ is
far less than $n$, the traditional choice of $p = 1/n$ will be far too
conservative, and the expected number of function evaluations until
the optimum is found will be $\Theta(nk)$. On the other hand, choosing
a higher value of $p$ such that $p=\omega(1/n)$ will not work if the
adversary chooses a value of $k$ quite close to $n$, since in this
case the EA will flip the leading 1-bits with too high probability and
have exponential runtime. However, several extensions of the $(1+1)$
EA have been proposed which are more effective for optimisation in
this uncertain environment. In \cite{bib:Doerr2015}, Doerr et
al. consider two different variants of the $(1+1)$ EA, one which
assigns different flip probabilities to each bit, and one which
samples a new bitflip probability from a distribution $Q$ in each
generation, both of which they show to have an expected runtime of
$O(k^2(\log k)^{1+\ep})$ on $\textsc{LeadingOnes}_k$. They also show
the $\log^{1+\ep}k$ term can be further reduced by more carefully
choosing the positional bitflip probabilities or the distribution $Q$;
however, in a follow-up work, it is shown that the upper bound for
both of these algorithms is nearly tight, that is, the expected
runtime is $\omega(k^2\log k)$ \cite{bib:Doerr2017}.  In
\cite{bib:Einarsson2018}, a different sort of self-adjusting $(1+1)$
EA is introduced for the hidden subset problem on the class of linear
functions. Rather than adjusting the mutation rate
% alongside the candidate solutions
in each generation during the actual search process, the algorithm
instead spends $O(k)$ generations approximating the hidden value $k$,
and then $O(k\log k)$ generations actually optimising $f_k$ now that
$k$ is approximately known. This algorithm not only improves the bound
from $O(k(\log k)^{2+\ep})$ in \cite{bib:Doerr2015} to $\Theta(k\log k)$
for $\textsc{OneMax}_k$ under the hidden subset model, but the
implicit constants of $(1\pm o(1))en\ln n$ are found as well, matching
the performance of a $(1+1)$ EA which knows $k$ in advance. However,
it remained to be demonstrated whether an EA could similarly solve the
$\textsc{LeadingOnes}_k$ problem at no extra cost when $k$ was unknown.

\subsection{Structure of the Paper}
\label{sec:structure}

Section \ref{sec:prelim} introduces notation, a formal description of
the self-adaptive algorithm (Algorithm~\ref{algo:stepwise-sa}), and
the analytical tools we used. Section~\ref{sec:main-result} provides
our main result, that Algorithm~\ref{algo:stepwise-sa} optimises
$\textsc{LeadingOnes}_k$ against an adversary in expected time
$O(k^2)$. Section~\ref{sec:experiments} is an experimental study on
theoretical benchmark functions, first illustrating the evolution of
the mutation rate throughout the optimisation process, then comparing
the average runtime during optimisation against an adversary of the
algorithm to some classic EAs and to the adaptive $(1+1)_{\alpha}$ EA
\cite{bib:Doerr2018}. Section~\ref{sec:conclusion}
concludes the paper.

\section{Preliminaries}\label{sec:prelim}

\subsection{General Notation}
\label{sec:notation}

For any $n\in\mathbb{N}$, let $[n]:=\{1,...,n\}$ and
$[0..n]:=\{0\}\cup [n]$.  The natural logarithm is denoted by
$\ln(.)$, and the logarithm base 2 by $\log(.)$. The Iverson bracket
is denoted by $[.]$, which is equal to 1 if the statement in the
brackets is true, and 0 otherwise.
The search space throughout this work is $\mathcal{X}:=\{0,1\}^n$, and we
refer to $x=(x_1,...,x_n)$ in $\mathcal{X}$ as a bitstring of length
$n$. Since we are interested in searching the space of mutation rates
along with the set of bitstrings, it will be convenient to define
an extended search space of
\begin{align}
  \mathcal{Y} := \sspace\times [\ep, 1/2].
\end{align}
The parameter $\ep = c/n$, where $c<1$ is a small constant with
respect to $n$, is necessary only for technical reasons in our
analysis. The Hamming distance between two bitstrings $x, x'$ is
denoted by $H(x,x')$. All asymptotic notation throughout this work is
with respect to $n$, the size of the problem space.  The
\emph{runtime} of a search process is defined as the number of fitness
evaluations before an optimal search point is found, denoted by
$T$. Generally we are concerned with the \emph{expected runtime},
$\expect{T}$.

\subsection{A Self-adaptive $(\mu, \lambda)$  EA}
\label{sec:alg-prelim}

We consider a non-elitist EA using self-adaptation of
mutation rates, outlined in Algorithm \ref{algo:stepwise-sa}. We refer
to a \emph{population} as a vector $P\in\mathcal{Y}^\lambda$, where
$\lambda\in\N$ is the \emph{population size}, and where the $i$-th
element $P(i)$ is called the $i$-th \emph{individual}. For an
individual $(x, \chi/n)\in\extsspace$,
we refer to
$\chi / n \in[\ep, 1/2]$ as the \emph{mutation rate}, and $\chi$
as the \emph{mutation parameter}. In each
\emph{generation} $t\in\N_0$, Algorithm \ref{algo:stepwise-sa} creates
the next population $P_{t+1}$ by independently creating $\lambda$ new
individuals according to a sequence of operations \emph{selection},
\emph{adaptation}, and \emph{mutation}.

\begin{algorithm}
  \caption{$(\mu, \lambda)$ Self-adaptive EA}
  \begin{algorithmic}[1]
    \Require Fitness function $f:\mathcal{X}\rightarrow\R$.
    \Require Population sizes
    $\mu,\lambda\in\mathbb{N}$, where $1\leq\mu\leq \lambda$.
    \Require Adaptation parameters $A > 1$, and
    $b,\pinc\in(0,1)$.
    \Require Initial population $P_0\in \mathcal{Y}^\lambda$.
    \For{$t$ in $0,1,2,\dots$
      until termination condition met}
    \State Sort $P_t$ st. $P_t(1)\succeq 
    \cdots \succeq P_t(\lambda),$ according to
    (\ref{eq:order}).
    \label{algo:order}
    \For{$i$ in $1,\dots,\lambda$}
    \State Set $(x, \chi/n) := P_t(I_t(i))$, $I_t(i)\sim\text{Unif}([\mu])$. \label{algo:select}
    \State Set $\chi' :=
    \begin{cases}
      \min\{A\chi, n/2\} \text{ with probability } \pinc\\
      \max\{b\chi, \ep n\} \text{ otherwise.}
    \end{cases}$
    \label{algo:adapt}
    \State Create $x'$ by independently flipping each bit of $x$ with
    probability $\chi'/n$.
    \label{algo:mutate}
    \State Set $P_{t+1}(i):=(x',\chi'/n)$.
    \label{algo:update}
    \EndFor
    \EndFor
  \end{algorithmic}
  \label{algo:stepwise-sa}
\end{algorithm}

\subsubsection{Selection}

We consider a variant of the standard $(\mu, \lambda)$ selection
scheme, where the $\mu\leq\lambda$ best individuals are chosen
according to fitness, with ties broken according to the individual
with higher mutation rate. More precisely, the population is first
sorted according to the ordering
\begin{gather}
  (x,\chi) \succeq (x',\chi') 
  \Leftrightarrow \label{eq:order}
  f(x) > f(x') \vee (f(x) = f(x') \wedge \chi \geq \chi'), 
\end{gather}
where ties of $f(x) = f(x')$ and $\chi = \chi'$ are broken
arbitrarily. Then, each parent is chosen uniformly from the $\mu$ top
individuals $P_t(1),\ldots,P_t(\mu)$.

We quantify the
\emph{selective pressure} of the selection mechanism 
using the \emph{reproductive rate}.
\begin{definition}[\cite{bib:Lehre2010}]
  \label{def:reproductive-rate}
  The \emph{reproductive rate} of Algorithm \ref{algo:stepwise-sa} is
  $%\begin{equation*}
    \alpha_0 := \max_{1\leq i \leq \lambda}\expect{R_t(i)},
  $
  where
  $R_t(i) := \sum_{j=1}^{\lambda}[I_t(j) = i]$.
\end{definition}
That is, $\alpha_0$ is the expected number of times per generation an
individual with the highest selection probability is chosen in step
\ref{algo:select} of Algorithm \ref{algo:stepwise-sa}.  A well-known
fact is that the reproductive rate of the standard ($\mu,\lambda$) EA
is $\lambda/\mu$ (Lemma 7 in \cite{bib:Lehre2011}). This is also
the case for Algorithm \ref{algo:stepwise-sa}.

\subsubsection{Adaptation}

Each chromosome $(x,\chi/n)$ carries both a search point $x$ and a
mutation parameter $\chi$. In order for the population to explore different
mutation rates, it must be possible for the offspring to inherit a
``mutated'' mutation parameter $\chi'$ different from its parent.
For the purpose of the theoretical analysis, we are looking for the
simplest possible update mechanism, which is still capable of adapting
the mutation rates in the population.

We will prove that the following simple multiplicative update scheme
suffices: given a parent with mutation parameter $\chi$, the offspring inherits an
increased mutation parameter $A\chi$ with probability $\pinc$, and a
reduced mutation parameter $b\chi$ with probability $1-\pinc$, where
$A$ and $b$ are two parameters satisfying $0<b<1<A$.
We choose the parameter names $A$ and $b$ for consistency with
the adaptive $(1+1)_{\alpha}$ EA already introduced in
\cite{bib:Doerr2018}, which similarly changes the mutation rate in
this step-wise fashion. However, unlike
Algorithm~\ref{algo:stepwise-sa}, the $(1+1)_{\alpha}$ EA changes
the mutation rate based on whether the
offspring is fitter than the parent.

Our goal is that the evolutionary algorithm adapts the mutation
parameter $\chi$ to the problem at hand, so that it is no longer
necessary to set the parameter $\chi$ manually.  It may seem
counter-productive to replace one mutation parameter by introducing
three new adaptation mechanism parameters $A,b,$ and $\pinc$. However,
we will show that while the mutation parameter $\chi$ must be tuned
for each problem, the same fixed setting of the parameters $A,b,\pinc$
is effective across many problems.  We conjecture that the
self-adaptive EA will be effective with other adaptation
mechanisms. For example, rather than multiplying by the constants $A$
and $b$, we could multiply the mutation parameter by a factor with
$\log$-normal distribution as was originally done in
\cite{Back1996}. We suspect that many adjustment mechanisms which
favour taking small steps from the current mutation rate could be
analysed similarly to the analysis presented in this work.

\subsubsection{Mutation}

The mutation step is when a new candidate solution is actually created
by our algorithm. We consider standard bit-wise mutation, where
a parent $x\in\{0,1\}^n$ with mutation rate $\chi/n$ produces an offspring
$x'\in\{0,1\}^n$ by flipping each bit of $x$ independently 
with probability $\chi/n$. We adopt the notation from \cite{bib:Lehre2010},
and consider the offspring $x'$ a random variable
$x'\sim\pmut(x,\chi)$, with distribution
\begin{align*}
 \prob(x'=\pmut(x,\chi)) := \left(\frac{\chi}{n}\right)^{H(x,x')}\left(1-\frac{\chi}{n}\right)^{n-H(x,x')}.
\end{align*}

\subsection{Level-based Analysis}
\label{sec:level-based}

We analyse the runtime of Algorithm~\ref{algo:stepwise-sa} using 
\emph{level-based analysis}. Introduced by Corus et al.
\cite{bib:Corus2017}, the level-based theorem is a general tool for
deriving upper bounds on the expected runtime for non-elitist
population-based evolutionary algorithms,
and has been applied to a wide range of algorithms, including to GAs
\cite{bib:Corus2017}, and EDAs \cite{bib:Dang2018}.

The theorem can be applied to any population-based stochastic process
$(P_t)_{t\in\N}$, where individuals in $P_{t+1}$ are sampled
independently from a distribution $D(P_t)$, where $D$ maps populations
to distributions over the search space. In the case of our algorithm,
$D$ is the composition of selection, adaptation, and mutation. The
theorem also assumes a partition $(A_1,\dots,A_m)$ of the finite
search space into $m$ subsets, also called \emph{levels}. Usually,
this partition is over the function domain $\sspace$, but since we are
concerned with tracking the evolution of the population over the
2-dimensional space of bitstrings and mutation rates, we will
rather work with subsets of $\extsspace$.

Given any subset $A \subseteq\extsspace$, we slightly abuse
notation and let
$%\begin{align*}
  |P \cap A|:= |\{i\in[\lambda] \mid P(i) \in A\}| 
$
denote the number of
individuals in a population $P\in\mathcal{Y}^\lambda$ that belong to the subset $A$. Given
a partition of the search space $\mathcal{X}$ into levels
$(A_1, \ldots, A_m)$, we define for notational convenience
$A_{\geq j}:=\bigcup_{i=j}^mA_i$ and $A_{> j}:=\bigcup_{i=j+1}^mA_i$.

\begin{theorem}[\cite{bib:Corus2017}]
  \label{thm:level-based}
  Given a partition
  $(A_1,\dots,A_{m})$ of $\mathcal{X}$, define $T :=
  \min\{t\lambda \mid P_t\cap A_{m}\neq\emptyset\}$, where for all
  $t\in\mathbb{N}$, $P_t\in\mathcal{X}^\lambda$ is the
  population of Algorithm 1 from \cite{bib:Corus2017} in generation $t$.
  If there
  exist $z_1,\dots,z_{m-1},\delta \in(0,1]$,
  and $\gamma_0 \in (0,1)$
  such that
  for any population $P\in \mathcal{X}^\lambda$,
  \begin{description}
  \item[(G1):]
    for each $j \in [m-1]$,
    if
    $|P\cap A_{\geq j}  | \geq \gamma_0\lambda$,
    then
    \[
      \displaystyle \Pr_{y\sim D(P)}\left( y\in A_{\ge j+1}\right)
      \geq z_j,
    \]
  \item[(G2):]
    for each $j \in [m-2]$, and all $\gamma\in(0,\gamma_0]$\\
    if
    $|P\cap A_{\ge j}  |  \geq \gamma_0\lambda$ and
    $|P\cap A_{\ge j+1}|  \geq   \gamma\lambda$, then
    \[
      \Pr_{y\sim D(P)}\left( y\in A_{\geq j+1}\right)
      \geq (1+\delta)\gamma,
    \]
  \item[(G3):] the population size $\lambda\in\mathbb{N}$ satisfies
    \[
      \lambda \geq \left(\frac{4}{\gamma_0\delta^2}\right)
      \ln\left(\frac{128m}{z_*\delta^2}\right), \text{ where }
      z_*:=\min_{j\in[m-1]} \{z_j\},
    \]
  \item[\textit{then}]
    $
      \expect{T} \leq \left(\frac{8}{\delta^{2}}\right) \sum_{j=1}^{m-1}
      \left(\lambda\ln\left(\frac{6\delta\lambda}{4+z_j\delta\lambda
          }\right)+\frac{1}{z_j}\right).
    $
  \end{description}
\end{theorem}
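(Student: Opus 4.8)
The plan is to track the progress of the population by its \emph{current level}, defined as the largest index $j$ such that at least $\gamma_0\lambda$ individuals lie in $\A{j}$, and to show that this current level advances through the partition at a controlled rate until it reaches $m$. The guiding intuition is that conditions (G1) and (G2) govern two complementary phases of progress at each level: a \emph{discovery} phase, in which the first individual crosses from the frontier level $j$ up into $\A{j+1}$, and a \emph{growth} phase, in which the number of individuals in $\A{j+1}$ multiplies until it too reaches the threshold $\gamma_0\lambda$, at which point the current level advances to $j+1$. Summing the expected durations of these two phases over all $m-1$ levels should reproduce the claimed bound, recalling that $T$ counts fitness evaluations, i.e.\ generations scaled by $\lambda$.

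The central structural fact I would exploit is that, given $P_t$, the $\lambda$ individuals of $P_{t+1}$ are sampled independently from $D(P_t)$. Hence for any subset $A\subseteq\extsspace$ the count $|P_{t+1}\cap A|$ is a sum of $\lambda$ independent indicators, so Chernoff bounds apply directly. With this in hand: (i) for the discovery phase, (G1) guarantees that whenever the current level is $j$, each offspring lands in $\A{j+1}$ with probability at least $z_j$, so the probability that at least one of the $\lambda$ offspring does so in a single generation is at least $1-(1-z_j)^{\lambda}$, yielding an expected discovery time of order $1/z_j$ evaluations (the $1/z_j$ term); (ii) for the growth phase, (G2) gives $\expect{\,|P_{t+1}\cap\A{j+1}| \,\mid\, P_t\,}\ge(1+\delta)\,|P_t\cap\A{j+1}|$ as long as the count stays below $\gamma_0\lambda$. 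This multiplicative drift, combined with a lower-tail Chernoff bound to keep the count from collapsing, should drive it from $1$ up to $\gamma_0\lambda$ in $O\!\big(\delta^{-1}\ln(\delta\lambda)\big)$ generations, which after scaling by the per-generation cost $\lambda$ produces the $\lambda\ln(6\delta\lambda/(4+z_j\delta\lambda))$ term.

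The delicate point, and the one I expect to be the main obstacle, is that the process is \emph{non-elitist}: nothing prevents the current level from slipping backward, since individuals in $\A{j}$ may fail to reproduce upward. I would therefore need a simultaneous lower-tail argument showing that, throughout the relevant time window, the count at the current level stays above $\gamma_0\lambda$ with high probability, so that the hypotheses of (G1) and (G2) remain valid during each phase. This is precisely where (G3) enters: the lower bound on $\lambda$ is calibrated so that each concentration failure probability is at most an inverse polynomial in $m$, $1/z_*$, and $1/\delta$, and a union bound over the $O(m)$ phases and their $\mathrm{poly}$-length durations keeps the total failure probability bounded away from $1$. Cleanly organising this bookkeeping — ideally by defining a single potential function that couples the current level with the fractional count above its frontier and then invoking a drift theorem equipped with tail guarantees, rather than chaining separate Chernoff estimates by hand — is the technically demanding part. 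The stated runtime bound then follows by taking expectations and summing the per-level contributions, with the global $8/\delta^2$ factor absorbing the constants accrued in the drift and concentration steps.
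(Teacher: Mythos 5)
First, a point of calibration: this paper never proves Theorem~\ref{thm:level-based} --- it is quoted from \cite{bib:Corus2017} and applied as a black box --- so your attempt can only be compared against the proof in that reference. Measured against it, your plan has the right architecture, and your closing remark identifies the actual mechanism used there. The proof in \cite{bib:Corus2017} defines the current level $Y_t := \max\{j \mid |P_t\cap A_{\geq j}|\geq\gamma_0\lambda\}$, exploits exactly the independence you isolate (given $P_t$, the offspring are i.i.d.\ samples from $D(P_t)$, so $|P_{t+1}\cap A_{\geq j}|$ is binomially distributed and Chernoff bounds apply), and lets (G1) drive discovery (the $1/z_j$ terms) while (G2) drives multiplicative growth of $|P_t\cap A_{\geq Y_t+1}|$ (the $\lambda\ln(\cdot)$ terms). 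Where you genuinely diverge is the bookkeeping for non-elitist backsliding: you propose chaining per-phase lower-tail estimates with a union bound over the $O(m)$ phases so that the hypotheses of (G1)/(G2) remain valid throughout each window. That route can be made to work, but it only yields the runtime up to unspecified constants and forces a restart-style argument whenever a window fails; it does not directly produce a bound on $\expect{T}$ of the stated form. The published proof instead takes the path you flag as ``ideal'': it constructs a single distance function on populations --- the sum of the remaining per-level costs, discounted at the current level by a term logarithmic in $\min\{|P\cap A_{\geq Y_t+1}|,\gamma_0\lambda\}$ --- and shows its one-step expected decrease is uniformly bounded below, with the probability that the current level slips backward (controlled via (G3) and the Chernoff bounds) absorbed directly into that drift computation rather than into a union bound over time; the additive drift theorem then gives the expected-time bound in one stroke, including the explicit constant $8/\delta^2$. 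So there is no wrong idea in your sketch, but it stops exactly where the hard work begins: the potential-function formulation is not merely tidier, it is what converts a high-probability phase analysis into an unconditional expectation bound, and making your phase-chaining version rigorous would require reinventing essentially that potential.
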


\section{Runtime Analysis on \leadingones}
\label{sec:main-result}

We now introduce our main result, which is an upper bound on the
optimisation time of Algorithm \ref{algo:stepwise-sa} on the
$\textsc{LeadingOnes}_k$ problem. Note that for population size
$\lambda = c\ln(n)$ and problem parameter $k \geq \log(n)\log(n)$,
the bound in the theorem simplifies to $O(k^2)$ which is
asymptotically optimal among all unary unbiased black-box algorithms,
regardless of whether the parameter $k$ is known
\cite{lehre_black-box_2012}.

\begin{theorem}
  \label{thm:k-lo}
  Algorithm \ref{algo:stepwise-sa} with
  $\frac{\lambda}{\mu} = \alpha_0 \geq 4$, constant parameters
  $A, b, \pinc\in\mathbb{R}$ satisfying $A > 1$,
  $(1+\delta)/ \alpha_0 < \pinc < 2/5$, and
  $0 < b < 1 / (1 + \sqrt{1 / (\alpha_0(1 - \pinc))})$ for some
  $\delta\in(0,1)$, parent population size $\mu = \Omega(\log(n))$, and $\lambda \geq c\ln(n)$
  for a large enough constant $c$, for any $k\in\N$, has expected
  runtime $O(k\lambda\log(n\lambda) + k^2)$ on
  $\textsc{LeadingOnes}_k$.
\end{theorem}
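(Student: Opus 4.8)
The plan is to apply the level-based theorem (Theorem~\ref{thm:level-based}) to the self-adaptive process on the extended space $\extsspace$, using a partition that records \emph{both} the number of leading ones and the mutation parameter $\chi$. Tracking $\chi$ is essential: an individual with $i$ leading ones produces a strictly fitter offspring only by flipping bit $i+1$ while leaving the first $i$ bits intact, an event of probability $\tfrac{\chi}{n}\bigl(1-\tfrac{\chi}{n}\bigr)^{i}$, which is $\Theta(1/i)$ precisely when $\chi=\Theta(n/i)$ and collapses once $\chi\gg n/i$. Since the fitness-appropriate rate $\chi\approx n/(i+1)$ \emph{shrinks} as progress is made, the levels cannot merely count leading ones; they must certify that the population has driven its rate into the band matching its current fitness.

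Concretely, I would refine each leading-ones count $i\in[0..k]$ into $\Theta(\log n)$ sublevels corresponding to the geometric cells that $\chi$ sweeps through $[\ep n, n/2]$ under the multiplicative updates by $A$ and $b$, ordered so that advancing a sublevel moves $\chi$ one step toward the target band $\Theta(n/i)$, and crossing an $i$-boundary means gaining a leading one while keeping $\chi$ in band. This yields $m=\Theta(k\log n)$ levels. A rate-refinement step has upgrade probability $z_j=\Omega(1)$ (a single $A$- or $b$-step occurs with constant probability $\pinc$ or $1-\pinc$, and an in-band rate preserves the leading ones with constant probability), whereas a leading-ones step has $z_j=\Omega(1/i)$. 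In the level-based bound the term $\sum_j 1/z_j$ is dominated by the $k$ leading-ones steps and equals $O(k^2)$; the term $\sum_j\lambda\ln(\cdots)$ splits into $O(k\lambda\log n)$ from the $\Theta(k\log n)$ constant-upgrade rate-sublevels (whose logarithmic factors are $O(1)$ since $z_j=\Omega(1)$) and $O(k\lambda\log\lambda)$ from the leading-ones steps, giving $O(k\lambda\log(n\lambda))$ overall, so that the two contributions match the claimed runtime.

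For condition (G1) I would verify the two upgrade probabilities above. For the selective-pressure condition (G2) I would use $\alpha_0=\lambda/\mu\ge 4$: an individual already at level $\ge j+1$ is selected an expected $\alpha_0$ times, so it suffices that adaptation-plus-mutation keeps it at level $\ge j+1$ with probability at least $(1+\delta)/\alpha_0$. The parameter window is exactly what makes this hold in both rate directions: $\pinc\,\alpha_0>1+\delta$ supplies enough upward pressure (reinforced by the tie-break toward higher $\chi$ in~(\ref{eq:order})) to keep the rate from collapsing below the band, while $\pinc<2/5$ together with the upper bound on $b$ caps the upward drift so that over-mutating individuals — which destroy leading ones and fall in fitness — cannot dominate the top levels. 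Condition (G3) then follows by substituting $m=\Theta(k\log n)$, $z_*=\Omega(1/k)$, and the hypotheses $\mu=\Omega(\log n)$ and $\lambda\ge c\ln n$.

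The main obstacle is the self-regulation of the mutation rate: showing that the population's rate is attracted to, and then tracks, the moving target $\Theta(n/i)$ as $i$ climbs from $0$ to $k$. This is a two-sided drift phenomenon — upward drift from the tie-break and from $\pinc\alpha_0>1$, downward drift from the $b$-steps and from the fitness loss of over-mutating individuals — and the delicate point is proving that the resulting equilibrium stays \emph{inside} the band uniformly over all $i$, so that the constant upgrade probabilities for the rate-sublevels and the $\Omega(1/i)$ probabilities for the leading-ones steps hold for \emph{every} admissible population, as (G1) and (G2) demand. Converting the parameter inequalities on $A,b,\pinc,\alpha_0$ into these uniform per-level bounds, while ensuring the upper levels never go extinct despite the rate wandering, is where essentially all the work lies.
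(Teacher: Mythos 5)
Your skeleton matches the paper's: the same two-dimensional partition of $\extsspace$ into $\Theta(\log n)$ geometric rate-sublevels per fitness value, the same upgrade probabilities ($\Omega(1)$ for a rate step, $\Omega(1/j)$ for a fitness step at an edge level), and the same accounting of the level-based bound into $O(k\lambda\log(n\lambda)) + O(k^2)$. But there is a genuine gap exactly at the point you defer to ``where essentially all the work lies,'' and your sketch of how it would be resolved is wrong in a way that matters. You claim over-mutating individuals ``destroy leading ones and fall in fitness'' and hence ``cannot dominate the top levels.'' The opposite is the threat: an individual that jumps to high fitness while carrying a mutation rate above what that fitness can sustain keeps its recorded fitness (only its \emph{offspring} tend to fall), and under the ordering (\ref{eq:order}) --- fitness first, ties broken toward \emph{higher} $\chi$ --- such individuals rank at the very top of the sorted population in the generation they appear. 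If more than $(1-\zeta/2)\mu$ of them occupy the top $\mu$ slots, the selection probability $\gamma\alpha_0$ on which both your (G1) and (G2) verifications rest simply fails, since the $\gamma\lambda$ individuals of $\A{(j,\ell)}$ are no longer guaranteed to be among the $\mu$ fittest. The level-based theorem demands the per-level bounds for \emph{every} admissible population, so this cannot be repaired by the equilibrium/two-sided-drift argument you anticipate, which only controls typical behaviour.

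The paper closes this gap with three devices, none of which appear in your proposal. First, the edge levels (\ref{eq:edge-level-def}) also contain $A_{>j}\times\left(\min\left(\tfrac{1}{2},\theta_2(j+1)\right),\min\left(\tfrac{1}{2},\theta_2(j)\right)\right]$: an individual whose fitness exceeds $j$ but whose rate exceeds the sustainable threshold is credited only with the level of the largest fitness $j'$ it can maintain, which is what makes the level ordering compatible with the ranking selection. Second, Lemma~\ref{lemma:few-bad} bounds the bad region $B$ of Eq.~(\ref{eq:B-def}) \emph{within a single generation, uniformly over the previous population's configuration}: an offspring lands in $B$ with probability below $(1-\zeta)/\alpha_0$ (either its new rate is already bad, so surviving at its current fitness has probability below $(1-\zeta)/\alpha_0$ by definition of $B$, or it must jump to fitness $u>j$, an event of probability at most $(1-\chi'/n)^{u-1}(\chi'/n) < (1-\zeta)/\alpha_0$), and a Chernoff bound then gives $\prob(|B\cap P_t|\geq (1-\zeta/2)\mu)\leq e^{-\Omega(\mu)}$. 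This one-generation self-correction is what replaces the occupation-style analysis you sketch. Third, a phase/restart argument (phases of $2t_0(k)/\lambda$ generations, a union bound exploiting $\mu=\Omega(\log n)$, and Markov's inequality) converts the per-generation failure probability into an $O(1)$ multiplicative overhead on the expected runtime. Without these --- in particular without a bound on $|B\cap P_t|$ that holds regardless of history --- your verification of (G1) and (G2) does not go through.
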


The proof of Theorem \ref{thm:k-lo} is structured as follows. In
Section \ref{sec:def-sublevels}, in order to apply the level-based
analysis and track the population's progress over a two-dimensional
landscape, we begin by defining a partition of $\extsspace$. Since our
partition is more involved than those usually applied to the
level-based theorem, we also verify it is truly a partition. In
Section \ref{sec:control-bad}, we identify a region of the search
space where individuals have a mutation rate which is too high with
respect to their fitness, then show that with overwhelming
probability, individuals in this region will not dominate the
population. In Section \ref{sec:calc-probs}, the main technical
section, we calculate the probabilities of a parent individual
producing an offspring in a level at least as good as its own, and of
producing an offspring in a strictly better level.
Finally in Section~\ref{sec:apply-level-based} we put
everything together and apply Theorem~\ref{thm:level-based} to our
partition to obtain an upper bound on the expected runtime.

\subsection{Partitioning the search space into levels}
\label{sec:def-sublevels}

We now partition the two-dimensional search space
$\extsspace=\mathcal{X}\times[\ep,1/2]$ into ``levels'', which is
required to apply
Theorem~\ref{thm:level-based}.
The proof of Theorem~\ref{thm:level-based} uses the levels to
measure the progress of the population through the search space. The
progress of Algorithm \ref{algo:stepwise-sa} depends both on the
fitness of its individuals, as well as on their mutation rates. We
start by defining a partition on the search space $\mathcal{X}$,
into $k+1$ canonical fitness levels, for $j\in[0..k]$,
\begin{align*}
  A_j := \{x\in\mathcal{X} \mid \leadingones(x)=j\}.
\end{align*}
These fitness levels will be used later to define a partition on the extended
search space $\mathcal{Y}$.

The probability of a ``fitness upgrade'', i.e., that a parent produces
an offspring which is fitter than itself depends on the mutation rate
of the parent. If the fitness of the parent is $j$, but its mutation
rate is significantly lower than $1/j$, then the algorithm will lose
too much time waiting for a fitness upgrade, and should rather produce
offspring with increased mutation rates. Conversely, if the mutation
rate is significantly higher than $1/j$, then the mutation operator is
too likely to destroy the valuable bits of the parent.

To make this intuition precise, we will define for each fitness level
$j\in[0..k-1]$, two \textit{threshold} values $\theta_1(j)$ and
$\theta_2(j)$. These values will be defined
such that when the mutation rate satisfies $\chi/n\in[\ep,\theta_1(j))$,
then the mutation rate is too low for a speedy fitness upgrade, \todo{}
when the mutation rate satisfies $\chi/n\in[\theta_1(j),\theta_2(j)]$,
then the mutation rate is ideal for a fitness upgrade, and when
the mutation rate satisfies $\chi/n\in(\theta_2(j),1/2]$,
then the mutation rate is too high. To not distract from our
introduction of the levels, we postpone the detailed derivation of the expressions
of the threshold values to Section \ref{sec:calc-probs}, and simply
assert that they satisfy the following conditions for all $j\in[0..k-1]$,
\begin{enumerate}
\item $\ep < \theta_1(j)< \min(1/2,\theta_2(j))$
\item $\theta_1(j)> \theta_1(j+1)$ 
\item $\theta_2(j)> \theta_2(j+1)$,
\end{enumerate}

Condition (1) states that $[\theta_1(j),\theta_2(j)]$ forms an
interval which always overlaps with the range of mutation rates
reachable by Algorithm~\ref{algo:stepwise-sa}, while (2) and (3)
state that both $\theta_1$ and $\theta_2$ are monotonically
decreasing functions. 

To reflect the progress of the population in terms of increasing the
mutation rate towards the ``ideal'' interval
$[\theta_1(j),\theta_2(j)]$ within a fitness level $j$, we partition
the extended search space $\mathcal{Y}$
into sub-levels $A_{(j,\ell)}$.
The lowest sub-level $A_{(j,0)}$ corresponds to
individuals with fitness $j$ and mutation rate in the interval from $\ep$
to $A\ep$. If the mutation rate is increased by a factor of $A$ from
this level, one reaches the next sub-level $A_{(j,1)}$. In general,
sub-level $A_{(j,\ell)}$ corresponds to individuals with fitness $j$
and mutation rates in the interval from $A^{\ell-1}\ep$ to
$A^{\ell}\ep$, etc. After the mutation rate has been increased a
certain number of times, which we call the ``depth'' of fitness
level $j$, one reaches the ideal interval $[\theta_1(j),\theta_2(j)]$. 

\begin{definition}
  For each $j\in[k-1]$,
  the \emph{depth} of level $j$ is the unique positive integer
  \begin{equation}
    \label{eq:depth-def}
    d_j := \min\left\{\ell\in\N\mid \ep A^{\ell}\geq \theta_1(j)\right\},
  \end{equation}
  where $A$ is the step-size parameter from Algorithm
  \ref{algo:stepwise-sa}.
\end{definition}

Our next step in introducing the levels which build our partition of
$\extsspace$ is to distinguish between two conceptual types of levels,
namely, between \textit{low} levels and \textit{edge} levels. The low
levels represent regions of $\extsspace$ where individuals have
mutation rate below $\theta_1(j)$, i.e.,
can still raise mutation rate while maintaining fitness with good
probability. For each fitness value $j\in[0..k-1]$, there are $d_j-1$
low levels.

Edge levels form a region of search points where the mutation rate is
neither too low nor too high with respect to $j$, i.e., in the ideal
interval from $\theta_1(j)$ to $\theta_2(j)$. It is these search points which are
best equipped for upgrading from fitness level $j$ to a strictly
better level. At the same time, increasing mutation further would put
these individuals in danger of ruining their fitness. To progress from
an edge level, an individual must strictly increase its fitness.

The final technicality to discuss before defining our partition is
where to place an individual with fitness $j$ and mutation rate
$\chi/n>\theta_2(j)$.
We avoid placing such individuals into any of the low or edge levels
corresponding to fitness level $j$. However, due to the
conditions (1) through (3) we imposed on $\theta_2$, there exists
some lower fitness value $j' < j$ such that
$\theta_1(j')\leq \chi/n \leq \theta_2(j')$. This means that $j'$ is
the largest number of bits the individual
will be able to
maintain with good enough probability. We will therefore add such
an individual
to a level corresponding to fitness level $j'$.

We now define the ``low levels'' and the
``edge levels'' on the extended search space $\mathcal{Y}=\mathcal{X}\times[\ep,1/2]$.
\begin{definition}
\label{def:sublevels}
For $j\in[0..k-1]$ and $\ell\in[d_j-1]$, we define the low levels as
\begin{gather}
  A_{(j, \ell)} := A_j \times \left[A^{\ell-1}\ep, \min \left(A^{\ell}\ep,\theta_1(j)\right)\right)
  \label{eq:L-def}
\end{gather}
and for $j\in[0..k-1]$, we define the edge levels as
\begin{align}
  \label{eq:edge-level-def}
  A_{(j,d_j)} := & A_j   \times \left[\theta_1(j), \min\left(\frac{1}{2},\theta_2(j)\right)\right] \cup\\
                & A_{>j}\times \left(\min \left(\frac{1}{2},\theta_2(j+1)\right), \min\left(\frac{1}{2},\theta_2(j)\right)\right] \nonumber
\end{align}
\end{definition}

Additionally, since all search points with $\textsc{LO}_k(x) \geq k$ are
globally optimal,
we simply define a new set $A_{(k, 1)}\in\extsspace$ where
\begin{equation}
  \label{eq:opt-def}
  A_{(k, 1)} := \{(x, \chi/n) \mid \textsc{LO}_k(x) = k\}.
\end{equation}

Thus, we define our partition of $\extsspace$ to consist of all sets
$A_{(j, \ell)}$ from Definition \ref{def:sublevels}, where
$j\in[0..k-1]$ and $\ell\in[d_j]$, and $A_{(k,1)}$. Hence,
there are $m:=\left(\sum_{j=0}^{k-1}d_j\right) + 1$
levels.

We now prove that the levels form a partition of the
extended search space $\mathcal{Y}$. To simplify the proof, we first
define some upper bounds for the sub-levels. For all $\ell\leq d_j$,
the definition of $d_j$ and Lemma \ref{lemma:r-bounds}
(\ref{item:r3b}) imply that
\begin{align*}
  A^{\ell-1}\ep< \theta_1(j)<\theta_2(j+1),
\end{align*}
which leads to the upper bounds
\begin{align}
  A_{(j,\ell)}
  &\subset A_{\geq j}\times \left[A^{\ell-1}\ep,\min\left(\frac{1}{2},\theta_2(j)\right)\right] \label{eq:level-set-bound2a}\\
  &\subseteq A_{\geq j}\times \left[A^{\ell-1}\ep,\theta_2(j)\right].\label{eq:level-set-bound2}
\end{align}
Furthermore, for all $v<d_j$, Eq. \ref{eq:L-def} gives the trivial upper bound \todo{}
\begin{align}
  A_{(j,v)} & \subset A_j\times (-\infty,A^{u}\ep). \label{eq:level-set-bound1}
\end{align}
Lemma \ref{lemma:non-intersecting-levels}
and Lemma \ref{lemma:levels-cover-searchspace} imply that we have a
partition of the search space.

\begin{lemma}\label{lemma:non-intersecting-levels}
  \todo{}
  For all $(u,v)\neq (j,\ell)$, it holds $A_{(u,v)}\cap A_{(j,\ell)}=\emptyset$.
\end{lemma}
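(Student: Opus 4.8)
The plan is to show that any two distinct level-sets $A_{(u,v)}$ and $A_{(j,\ell)}$ are disjoint by a case analysis on how their fitness indices compare. The levels are products (or unions of products) of a fitness-set component and a mutation-rate interval, so disjointness will follow either because the fitness components are disjoint or because the mutation-rate intervals are disjoint. First I would dispose of the optimal level $A_{(k,1)}$: it consists exactly of points with $\textsc{LO}_k(x)=k$, whereas every other level lies in $A_{\ge j}$ with $j\le k-1$ but actually has fitness component contained in $A_j \cup A_{>j}$ with the relevant points having fitness at most... — more carefully, since all non-optimal levels are built from $A_j$ and $A_{>j}$ restricted to mutation rates at most $\min(1/2,\theta_2(j))<\theta_2(0)$, I would simply note that any $x$ with $\textsc{LO}_k(x)=k$ never appears in a low or edge level indexed by $j\le k-1$ because those only admit fitness strictly less than $k$ in their $A_j$ part, and the $A_{>j}$ part of an edge level still carries a mutation-rate constraint that, combined with the argument below, keeps the cases separate. (This edge case deserves an explicit line but is not the crux.)

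The main body splits into two regimes. \textbf{Same fitness index} ($u=j$): here both sets share the fitness factor $A_j$, so I reduce to showing the mutation-rate intervals are disjoint whenever $v\neq\ell$. For two low levels, the intervals $[A^{v-1}\ep,\min(A^{v}\ep,\theta_1(j)))$ and $[A^{\ell-1}\ep,\min(A^{\ell}\ep,\theta_1(j)))$ are consecutive half-open dyadic-in-$A$ blocks and are manifestly disjoint for $v\neq\ell$. For a low level versus the edge level $A_{(j,d_j)}$, the low level caps its rate strictly below $\theta_1(j)$ while the edge level starts at $\theta_1(j)$, giving disjointness; I would invoke the definition of $d_j$ in Eq.~\eqref{eq:depth-def} to confirm $A^{\ell}\ep\le\theta_1(j)$ for $\ell<d_j$ so the cap is genuinely below the threshold.

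\textbf{Different fitness index} ($u\neq j$, say $u<j$): the delicate part is the $A_{>j}$ tail in the edge-level definition \eqref{eq:edge-level-def}, since this lets an edge level for a low fitness value contain high-fitness search points. The strategy is to use the mutation-rate intervals to separate overlapping fitness components. An individual of fitness $j$ sitting in the edge level $A_{(u,d_u)}$ (via its $A_{>u}$ part) must have mutation rate in $(\min(1/2,\theta_2(u+1)),\min(1/2,\theta_2(u))]$, whereas its ``own'' levels $A_{(j,\ell)}$ confine it to rates at most $\min(1/2,\theta_2(j))$; by the monotonicity condition (3) on $\theta_2$ together with Lemma~\ref{lemma:r-bounds}, these rate windows for distinct fitness indices tile the line without overlap, so the same point cannot land in levels of two different fitness indices. \textbf{The hard part will be} bookkeeping the half-open versus closed endpoints across the $\theta_2(u+1)$ boundaries and the $\min(1/2,\cdot)$ truncations, to be sure the tiling leaves no rate value doubly covered and no gap; this is where conditions (1)--(3) and the inequality $\theta_1(j)<\theta_2(j+1)$ (cited just before the lemma) do the real work. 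Once the rate intervals are verified to be pairwise disjoint across fitness indices, disjointness of the corresponding level-sets is immediate.
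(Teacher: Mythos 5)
Your proposal is correct and takes essentially the same route as the paper's proof: a case split on same versus different fitness index, with same-index disjointness coming from the geometric spacing of the intervals $[A^{\ell-1}\ep, A^{\ell}\ep)$ capped below $\theta_1(j)$, cross-index disjointness for low levels coming from $A_u\cap A_{\geq j}=\emptyset$, and, for the edge-level tail $A_{>u}\times\left(\min\left(\tfrac{1}{2},\theta_2(u+1)\right),\min\left(\tfrac{1}{2},\theta_2(u)\right)\right]$, the monotonicity $\theta_2(j)\leq\theta_2(u+1)$ together with the left-open/right-closed endpoint convention --- precisely the paper's Cases 1--3. Two minor notes: your ``no gap in the tiling'' worry is immaterial for disjointness (gaps concern the coverage statement, Lemma~\ref{lemma:levels-cover-searchspace}, not this lemma), and the paper's proof simply omits the optimal level $A_{(k,1)}$ from the case analysis rather than separating it via a mutation-rate constraint, which $A_{(k,1)}$ as defined in Eq.~(\ref{eq:opt-def}) does not actually carry, so your parenthetical justification for that case would not go through as written.
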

\begin{proof}
  The following three cases use that $(X\times Y)\cap (U\times V)=(X\cap U)\times (Y\cap V)$.
  \newline
  \underline{Case 1:} $u=j$. Assuming w.l.o.g. that $v<\ell\leq d_u$,
  the bounds (\ref{eq:level-set-bound2}) and  (\ref{eq:level-set-bound1}) give
  \begin{align*}
    A_{(u,v)}\cap A_{(j,\ell)}
    & \subset A_u\times \left((-\infty,A^v\ep)\cap [A^{\ell-1} \ep,\infty)\right)
     = \emptyset,
  \end{align*}
  where the last equality follows from $A^v\ep\leq A^{\ell-1}\ep$.
 
\underline{Case 2:} $u+1\leq j$ and $v<d_u$.
In this case, the bounds (\ref{eq:level-set-bound2}) and
(\ref{eq:level-set-bound1}) give
  \begin{align*}
    A_{(u,v)}\cap A_{(j,\ell)}
    & \subset (A_u\cap A_{\geq j})\times (-\infty,\infty)
     = \emptyset,
  \end{align*}  
  where the last equality follows from $A_u\cap A_{\geq j}=\emptyset.$

\underline{Case 3:} $u+1\leq j$ and $v=d_u$. In this
case, the bound (\ref{eq:level-set-bound2a}) and the definition of
$A_{(u,d_u)}$ in
(\ref{eq:edge-level-def}) give
  \begin{align*}
    A_{(u,v)}\cap A_{(j,\ell)}
    & \subset A_{(u,d_u)} \cap \left(A_{\geq j}\times \left(-\infty,\min\left(\frac{1}{2},\theta_2(j)\right)\right]\right)\\
    & \subset \left(A_{>u}\cap A_{\geq j}\right)\times \\
    &\quad\bigg(\left(\min\left(\frac{1}{2},\theta_2(u+1)\right),\infty\right),\\
    &\quad\quad\left(-\infty,\min\left(\frac{1}{2},\theta_2(j)\right)\right]\bigg)
     = \emptyset,
  \end{align*}
  where the last equality follows from the fact that the function $\theta_2$ decreases
  monotonically, thus $\theta_2(j)\leq \theta_2(u+1)$.
\end{proof}

\begin{lemma}\label{lemma:levels-cover-searchspace}
  $\bigcup_{j=0}^k\bigcup_{\ell=1}^{d_j} A_{(j,\ell)}=\mathcal{Y}$.
\end{lemma}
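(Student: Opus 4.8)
The plan is to establish only coverage, $\bigcup_{j,\ell} A_{(j,\ell)}=\mathcal{Y}$, since disjointness is already handled by Lemma~\ref{lemma:non-intersecting-levels}, and the two together give a genuine partition. I would fix an arbitrary point $(x,\chi/n)\in\mathcal{Y}$, put $j:=\leadingones(x)\in[0..k]$, and argue by cases, first on the fitness $j$ and then on where the rate $\chi/n\in[\ep,1/2]$ sits relative to the thresholds $\theta_1(j)$ and $\min(1/2,\theta_2(j))$. If $j=k$ the point lies in the optimal level $A_{(k,1)}$ of \eqref{eq:opt-def}, so from now on I assume $j\leq k-1$.

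For $j\leq k-1$ there are three rate-regions. In the \emph{low-rate} region $\chi/n\in[\ep,\theta_1(j))$ I would use the geometric grid $\ep,A\ep,A^2\ep,\dots$: the consecutive half-open intervals underlying the low levels $A_{(j,1)},\dots,A_{(j,d_j-1)}$ of \eqref{eq:L-def}, capped at $\theta_1(j)$ via the minimality in the definition \eqref{eq:depth-def} of $d_j$, tile the strip $A_j\times[\ep,\theta_1(j))$, so $(x,\chi/n)$ falls into the unique low level whose rate-interval contains $\chi/n$. In the \emph{ideal} region $\chi/n\in[\theta_1(j),\min(1/2,\theta_2(j))]$, condition~(1) guarantees this interval is non-empty, and it is precisely the first component of the edge level $A_{(j,d_j)}$ in \eqref{eq:edge-level-def}, so $(x,\chi/n)\in A_{(j,d_j)}$. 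Both of these are routine interval bookkeeping within a single fitness level.

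The crux is the \emph{overflow} region $\chi/n\in(\min(1/2,\theta_2(j)),1/2]$, which is non-empty only when $\theta_2(j)<1/2$. Here the rate is too high for the individual's own fitness, and the point must be absorbed by the $A_{>j'}$ component of a \emph{lower} edge level. I would invoke the strict monotonicity of $\theta_2$ (condition~(3)) to see that the intervals $\bigl(\min(1/2,\theta_2(j'+1)),\min(1/2,\theta_2(j'))\bigr]$ for $j'=0,\dots,j-1$ are consecutive and tile $(\theta_2(j),1/2]$, using the boundary fact $\theta_2(0)\geq 1/2$ (from the derivation of the thresholds in Section~\ref{sec:calc-probs}) so that the topmost interval actually reaches $1/2$. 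This yields a unique $j'<j$ with $\chi/n$ in its interval; since $x\in A_j\subseteq A_{>j'}$ and $j'\leq j-1\leq k-1$ keeps $A_{(j',d_{j'})}$ well-defined, the pair lies in the second component of $A_{(j',d_{j'})}$.

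I expect case (iii) to be the main obstacle, because it is the only place where the two-dimensional construction genuinely couples different fitness levels: coverage of high mutation rates at fitness $j$ is delegated downward to an edge level at some $j'<j$, and this works out only because $\theta_2$ is strictly decreasing and attains the maximal rate $1/2$ at fitness $0$. I would therefore state the boundary property $\theta_2(0)\geq 1/2$ explicitly and verify that the selected $j'$ always satisfies $j'<j$, guaranteeing $A_{>j'}\supseteq A_j$; the tiling computations in (i) and (ii) are then straightforward.
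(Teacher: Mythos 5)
Your proposal is correct, and it reaches the same conclusion as the paper by a structurally different route. The paper does not argue pointwise: it proves the stronger suffix identity \eqref{eq:union-induction}, $\bigcup_{j=u}^k\bigcup_{\ell=1}^{d_j}A_{(j,\ell)}=A_{\geq u}\times[\ep,\min(1/2,\theta_2(u))]$, by downward induction on $u$ from $k$ to $0$, gluing at each step the two pieces of the edge level $A_{(u-1,d_{u-1})}$ (the ideal strip at fitness $u-1$ and the overflow strip $A_{\geq u}\times(\min(1/2,\theta_2(u)),\min(1/2,\theta_2(u-1))]$) onto the inductively covered set, and closing at $u=0$ via $\theta_2(0)>1/2$ from Lemma~\ref{lemma:r-bounds}~(\ref{item:r0}). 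Your pointwise case analysis is exactly the unrolled form of that induction: your telescoping of the overflow intervals over $j'=0,\dots,j-1$ plays the role of the inductive invariant, and you invoke the same two facts in the same roles --- monotonicity of $\theta_2$ (condition (3)) to make consecutive intervals abut, and $\theta_2(0)\geq 1/2$ to close the top of the range at $1/2$. What your version buys is that it is more elementary (no auxiliary identity to carry) and it localises precisely where each hypothesis enters, including your correct observation that the overflow case is the only place where fitness levels couple; what the paper's induction buys is an exact formula for every suffix union, which keeps the interval bookkeeping implicit and makes disjointness-adjacent facts reusable. One shared gloss is worth noting but is not a defect of your argument relative to the paper's: as literally written, the low levels only tile $A_j\times[\ep,A^{d_j-1}\ep)$, because minimality in \eqref{eq:depth-def} forces $A^{d_j-1}\ep<\theta_1(j)$, so the slice $[A^{d_j-1}\ep,\theta_1(j))$ is formally unassigned --- an off-by-one in the level definitions that the paper's own inductive step silently smooths over in the same way (it likewise treats the levels at fitness $u-1$ as tiling $[\ep,\min(1/2,\theta_2(u-1))]$, and its base case treats $A_{(k,1)}$ of \eqref{eq:opt-def} as rate-restricted when it is not); your reading of the cap at $\theta_1(j)$ matches the evident intent of the $\min$ in the definition, so your proof stands at the same level of rigour as the published one.
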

\begin{proof}
  We prove by induction on $u\leq k$ that
  \begin{align}
    \bigcup_{j=u}^k\bigcup_{\ell=1}^{d_j} A_{(j,\ell)}=A_{\geq u}\times \left[\ep,\min\left(\frac{1}{2},\theta_2(u)\right)\right].\label{eq:union-induction}
  \end{align}
  For the base step, \eqref{eq:union-induction} holds when $u=k$ because by the level definition \todo{}
  \begin{align}
    \bigcup_{\ell=1}^{d_k} A_{(k,\ell)} & = A_k\times \left[\ep,\min\left(\frac{1}{2},\theta_2(k)\right)\right].
  \end{align}
  For the inductive step, assume that (\ref{eq:union-induction}) holds
  for some $u\in[k]$. We then have
  \begin{align*}
    &\bigcup_{j=u-1}^k\bigcup_{\ell=1}^{d_j} A_{(j,\ell)}
      = \left(\bigcup_{\ell=1}^{d_j} A_{(u-1,\ell)}\right)\\
    &\quad\quad\quad\quad\quad\quad\quad~ \cup\left(A_{\geq u}\times  \left[\ep,\min\left(\frac{1}{2},\theta_2(u)\right)\right]\right)\\
    &\quad = \left(A_{u-1}\times \left[\ep,\min\left(\frac{1}{2},\theta_2(u-1)\right)\right]\right) \\
    &\quad\quad \cup \left(A_{\geq u}\times \left(\min\left(\frac{1}{2},\theta_2(u)\right),\min\left(\frac{1}{2},\theta_2(u-1)\right)\right]\right)\\
    &\quad\quad \cup\left(A_{\geq u}\times  \left[\ep,\min\left(\frac{1}{2},\theta_2(u)\right)\right]\right)\\
    &\quad = A_{\geq u-1}\times \left[\ep,\min\left(\frac{1}{2},\theta_2(u-1)\right)\right].
  \end{align*}
  By induction, (\ref{eq:union-induction}) holds for $u=0$. The proof
  is now complete by noting that $\theta_2(0)>1/2$ due to Lemma~\ref{lemma:r-bounds}~(\ref{item:r0}).
\end{proof}

The level-based theorem assumes that the levels are totally ordered,
however we have introduced two-dimensional levels. We
will order the levels using the lexicographic order $\succeq$ defined for
$j,j',\ell,\ell'$ by
\begin{align*}
  A_{(j',\ell')} \succeq A_{(j,\ell)} \;\Longleftrightarrow\;
  (j' > j) \vee (j' = j \wedge \ell' \geq \ell).
\end{align*}
Also, it will be convenient to introduce the notation
\begin{equation}
  \label{eq:Aorbetter-def}
  A_{\geq (j,\ell)}:= \bigcup \left\{ A_{(j', \ell')}\mid A_{(j', \ell')}\succeq A_{(j, \ell)} \right\}.
\end{equation}

\subsection{Survival and Upgrade Probabilities}
\label{sec:calc-probs}

Having partitioned the search space into levels, the next steps in
applying the level-based theorem are to prove that conditions (G1) and
(G2) are satisfied. This amounts to estimating the probability
that an offspring does not decrease to a lower level
(condition (G2)), and the probability that it upgrades to a
strictly better level (condition (G1)). It will be
convenient to introduce a measure for the probability of reproducing a
bitstring of equal or better fitness by applying a mutation rate $\chi/n$.
\begin{definition}
  \label{eq:r-def}
  \todo{} For all $j\in[0..n]$ and $\chi\in[\ep n,n/2]$, we define
  the
  \emph{survival probability} as
  \begin{align*}
    r(j, \chi) := \min_{x\in A_j}\prob_{x'\sim\pmut(x, \chi)}(x'\in A_{\geq j}).
  \end{align*}
\end{definition}
For \leadingones, it is straightforward to show that
$r(j, \chi) = \left(1 - \chi / n\right)^{j}$.

\begin{figure}
  \centering
  \includegraphics[width=8cm]{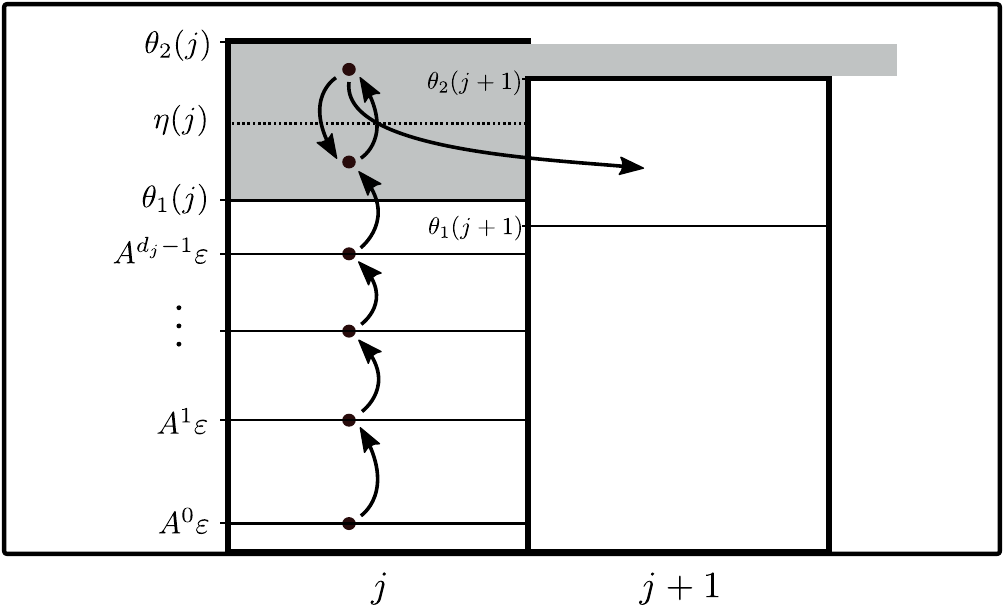}
  \caption{A typical lineage of individuals. The population makes progress
    keeping its $j$ leading 1-bits and raising mutation
    rate. Once in the edge region (grey), it remains in the edge region
    while increasing or decreasing mutation. Individuals in the edge
    region can move to a strictly better level by obtaining at least
    $j+1$ leading 1-bits and lowering mutation rate.}
  \label{fig:levels2}
\end{figure}

Fig. \ref{fig:levels2} illustrates a typical lineage of individuals,
from fitness level $j$ to fitness level $j+1$. Starting from some low
mutation rate in fitness level $j$, the mutation rate is increased by
a factor of $A$ in each generation, until the mutation rate reaches
the interval $[\theta_1(j),\theta_2(j)]$, i.e., the edge level. The
lineage circulates within the edge level for some generations,
crossing an intermediary value $\eta(j)$, before the fitness improves,
and fitness level $j+1$ is reached.

It is critical to show that with sufficiently high probability, the
lineage remains in the edge level before upgrading to fitness level
$j+1$.  This is ensured by the bounds in
Lemma~\ref{lemma:r-bounds}. Statement (\ref{item:r3b}) implies that we
cannot overshoot the edge level by increasing the mutation
rate. Statement (\ref{item:r1}) implies that below the intermediary
mutation rate $\eta(j)$, the mutation rate can still be increased by a
factor of $A$.  Conversely, statement (\ref{item:r2}) means that above
the intermediary mutation rate $\eta(j)$, it is safe to decrease the
mutation rate by a factor of $b$. Statements (\ref{item:r4}) and
(\ref{item:r5}) ensure that
an individual in an edge level can always either increase or decrease
mutation rate for there to be a sufficiently high probability of
maintaining the individual's fitness value. Finally, statements
(\ref{item:r0b}) and (\ref{item:r0c}) imply that within the edge
level, the upgrade probability is $\Theta(1/j)$.

Before we can prove these statements, recall that we have delayed
formally defining the functions $\theta_1$, $\eta$, or $\theta_2$. In
order to derive the claimed bounds for Lemma \ref{lemma:r-bounds}, we
do this now. For $j\geq 1$, let
  \begin{align}
    \eta(j) &:= \frac{1}{2A}\left(1 - \left(\frac{1+\delta}{\alpha_0\pinc}\right)^{1/j}\right)\label{eq:eta-def}\\
    \theta_1(j) &:= b \eta(j)\label{eq:theta1-def}\\
    \theta_2(j) &:=  1 - q^{1/j}\label{eq:theta2-def}
  \end{align}
  where
  \begin{align*}
    q  :=  \frac{1- \zeta}{\alpha_0},\quad    
    r_0  := \frac{1+\delta}{\alpha_0(1 - \pinc)}, \text{ and }
  \end{align*}
  \begin{align}
    \zeta  := 1 - \alpha_0(r_0)^{1+\sqrt{r_0}}\label{eq:zeta-def}.
  \end{align}
  Furthermore, for the special case $j=0$, define
  \begin{align*}
    \eta(0)      := \frac{\eta(1)}{A},\quad
    \theta_1(0)  := b\eta(0),\quad\text{and}\quad
    \theta_2(0)  := \frac{\theta_2(1)}{b}.
  \end{align*}
  Note that these definitions, along with statement
  \eqref{item:r0} of Lemma \ref{lemma:r-bounds}, ensure $\theta_1$
  and $\theta_2$ satisfy the informal conditions from Section
  \ref{sec:def-sublevels} for $\ep$ small enough.

\begin{lemma}
  \label{lemma:r-bounds}
  Let $A>1$, $b<1$, and $\pinc\in(0,1)$ be constants satisfying the
  constraints in Theorem \ref{thm:k-lo}. Then there exists a constant
  $\delta\in(0,1/10)$ such that for all $j\in[0..k-1]$ and
  $\chi/n\in[\ep, 1/2]$,
  \begin{enumerate}[(i)]
  \item\label{item:r0} $\theta_1(0)<\eta(0) < 1/2 < \theta_2(0)$,
  \item\label{item:r0b} $\theta_2(j)=\Omega(1/j)$
  \item\label{item:r0c} $\theta_1(j)=O(1/j)$
  \item\label{item:r1} $A\eta(j) \leq \theta_2(j)$,
  \item\label{item:r2} $b\eta(j) \geq \theta_1(j)$,
  \item\label{item:r3} $b\theta_2(j) < \theta_2(j+1)$,
  \item\label{item:r3b} $A\theta_1(j)\leq \theta_2(j+1)$,
  \item\label{item:r4} if $\frac{\chi}{n} \leq \eta(j),$ then $r(j, A\chi) \geq \frac{1+\delta}{\alpha_0\pinc}$, and
  \item\label{item:r5} if $\frac{\chi}{n} \leq \theta_2(j)$, then $r(j, b\chi) \geq \frac{1 + \delta}{\alpha_0(1 - \pinc)}$.
  \end{enumerate}
\end{lemma}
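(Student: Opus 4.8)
The plan is to work throughout from the explicit survival probability $r(j,\chi)=(1-\chi/n)^j$ recorded after Definition~\ref{eq:r-def}, and to treat the two \emph{survival} statements \eqref{item:r4} and \eqref{item:r5} as primary, since the threshold functions $\eta,\theta_1,\theta_2$ were evidently reverse-engineered precisely so that these hold; the remaining seven statements are then either immediate or follow from the explicit formulas together with the parameter constraints. Every constraint in Theorem~\ref{thm:k-lo} involving $\delta$ is strict, and each quantity below depends continuously on $\delta$, reducing at $\delta=0$ to a bound guaranteed by the hypotheses (for instance $b<1/(1+\sqrt{1/(\alpha_0(1-\pinc))})$). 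So I would carry out all estimates with a symbolic small $\delta$ and fix a single $\delta\in(0,1/10)$ at the very end, small enough that all the strict inequalities survive.

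For \eqref{item:r4}, when $\chi/n\le\eta(j)$ the definition \eqref{eq:eta-def} gives $A\chi/n\le A\eta(j)=\tfrac12(1-s)$ with $s:=((1+\delta)/(\alpha_0\pinc))^{1/j}$, hence $1-A\chi/n\ge\tfrac{1+s}{2}$. Since $\pinc>(1+\delta)/\alpha_0$ forces $s<1$, we have $\tfrac{1+s}{2}\ge s$, so $r(j,A\chi)\ge(\tfrac{1+s}{2})^j\ge s^j=(1+\delta)/(\alpha_0\pinc)$, which is exactly \eqref{item:r4}. Statement \eqref{item:r5} is the crux. Taking the worst case $\chi/n=\theta_2(j)$ and substituting $u:=r_0^{1/j}$ (so that $q^{1/j}=u^{1+\sqrt{r_0}}$, using $q=r_0^{1+\sqrt{r_0}}$), the claim $r(j,b\chi)\ge r_0$ reduces, after extracting $j$-th roots, to the one-variable inequality $h(u):=1-b(1-u^{1+\sqrt{r_0}})-u\ge0$ on $u\in(0,1)$. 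Here $h(1)=0$ and $h'(u)=b(1+\sqrt{r_0})u^{\sqrt{r_0}}-1<b(1+\sqrt{r_0})-1\le0$ as soon as $b(1+\sqrt{r_0})\le1$; this last inequality is exactly where the hypothesis on $b$ (together with the small choice of $\delta$) enters. Monotonicity of $h$ then gives $h(u)>h(1)=0$, proving \eqref{item:r5}.

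The ordering statements follow from the formulas. Statement \eqref{item:r2} is an equality by $\theta_1:=b\eta$; statement \eqref{item:r0} is a direct check of the special values $\eta(0)=\eta(1)/A$ and $\theta_2(0)=\theta_2(1)/b$, using $q<1/2$ (which holds since $r_0<1/2$ under $\alpha_0\ge4$, $\pinc<2/5$). For \eqref{item:r0b} and \eqref{item:r0c} I would use that $1-c^{1/j}=\Theta(1/j)$ for fixed $c\in(0,1)$, by squeezing it between constant multiples of $\ln(1/c)/j$; positivity of $\ln(1/c)$ needs $q<1$ for $\theta_2$ and $(1+\delta)/(\alpha_0\pinc)<1$ for $\theta_1$, both from the constraints. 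Statement \eqref{item:r1} reduces to $t\le\tfrac{1+s}{2}$ with $t:=q^{1/j}$: since $\pinc<1/2$ makes the base $(1+\delta)/(\alpha_0\pinc)$ exceed $r_0$, and the exponent $1+\sqrt{r_0}>1$ only shrinks the base further, one gets $t<r_0^{1/j}<s\le\tfrac{1+s}{2}$. Finally \eqref{item:r3b} follows by combining \eqref{item:r1} and \eqref{item:r3} as $A\theta_1(j)=b\,(A\eta(j))\le b\,\theta_2(j)<\theta_2(j+1)$.

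The remaining statement \eqref{item:r3}, namely $b\theta_2(j)<\theta_2(j+1)$, is the main obstacle, because it must hold for every level $j$ at once while the bound on $b$ is essentially tight. For large $j$ I would recycle \eqref{item:r5}: evaluated at $\chi/n=\theta_2(j)$ it gives $1-b\theta_2(j)\ge r_0^{1/j}$, and comparing exponents shows $r_0^{1/j}\ge q^{1/(j+1)}=1-\theta_2(j+1)$ precisely when $j\sqrt{r_0}\ge1$, i.e. for all $j\ge1/\sqrt{r_0}$, which yields \eqref{item:r3} on that range. The difficulty is the finitely many small levels $j<1/\sqrt{r_0}$, where this slack vanishes and \eqref{item:r3} must be verified directly from the explicit form of $\theta_2$ (for such small $j$ one may also exploit that the partition caps the thresholds at $1/2$, so it suffices to control $\min(1/2,\theta_2(j))$). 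Checking this finite set of cases uniformly in the admissible parameters, while keeping a single valid $\delta$, is the step I expect to require the most care.
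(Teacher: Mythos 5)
Most of your proposal is correct and, for seven of the nine statements, runs essentially parallel to the paper's proof: your treatment of (\ref{item:r4}) is the paper's computation, your continuity-in-$\delta$ bookkeeping is if anything more explicit than the paper's (which fixes $\delta<1/10$ and silently absorbs it into the strict hypotheses of Theorem~\ref{thm:k-lo}), and your proof of (\ref{item:r5}) is a genuinely cleaner route: after the substitution $u:=r_0^{1/j}$ you reduce to $h(u):=1-b\left(1-u^{1+\sqrt{r_0}}\right)-u\geq 0$ on $(0,1)$ with $h(1)=0$ and $h'(u)=b(1+\sqrt{r_0})u^{\sqrt{r_0}}-1\leq 0$ under $b(1+\sqrt{r_0})\leq 1$, a one-line calculus argument, whereas the paper writes $b\leq\frac{1}{1+\sqrt{r_0}}=\frac{\ln r_0}{\ln q}$ (using $q=r_0^{1+\sqrt{r_0}}$) and invokes the monotone decrease of $\frac{1-r_0^{1/j}}{1-q^{1/j}}$ towards its limit $\ln(r_0)/\ln(q)$. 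The two are equivalent in content, but your version is more self-contained.

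The genuine gap is exactly where you flagged it: statement (\ref{item:r3}) is never proved for the small levels $j<1/\sqrt{r_0}$; you only promise a ``finite case check from the explicit form of $\theta_2$'' and acknowledge you have not done it. This is not a routine residue — it is the one place the hypothesis on $b$ does its real work, and the paper closes it with a uniform argument requiring no case split at all. The idea you are missing is that the needed slack comes from $q<r_0$ rather than from an exponent comparison: since $q=r_0^{1+\sqrt{r_0}}$ with $r_0<1$, one has $\sqrt{q}<\sqrt{r_0}$, hence $b\leq\frac{1}{1+\sqrt{r_0}}<\frac{1}{1+\sqrt{q}}=\frac{1-\sqrt{q}}{1-q}$, and the right-hand side is precisely the $j=1$ value of $g(j):=\frac{1-q^{1/(j+1)}}{1-q^{1/j}}$, which the paper observes is monotonically increasing in $j$; therefore $b<g(j)$, i.e. $b\theta_2(j)<\theta_2(j+1)$, for \emph{all} $j\geq 1$ simultaneously. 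By contrast, your large-$j$ recycling of (\ref{item:r5}) compares $q^{1/(j+1)}$ with $r_0^{1/j}$, which holds iff $j\sqrt{r_0}\geq 1$ and is tight there — as written it also only yields the non-strict inequality $b\theta_2(j)\leq\theta_2(j+1)$, while the lemma asserts strictness (recoverable, but only by noting the ratio bound on $b$ is strict at every finite $j$) — whereas comparing $\sqrt{q}$ with $\sqrt{r_0}$ retains a constant margin at every level. Your fallback of capping the thresholds at $1/2$ would also change the statement proved: (\ref{item:r3}) is used uncapped, both in deriving (\ref{item:r3b}) and in Lemma~\ref{lemma:upgrade-prob}. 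So: correct skeleton and a nicer derivative argument for (\ref{item:r5}), but the proposal as it stands does not establish (\ref{item:r3}) — and hence not (\ref{item:r3b}) — for the small-$j$ levels.
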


\begin{proof}
  Before proving statements (\ref{item:r0})--(\ref{item:r5}),
  we derive bounds on the three constants $q, \zeta,$ and $r_0$.
  By the assumptions $\pinc < 2/5$ and $\alpha_0\geq
  4$ from Theorem \ref{thm:k-lo} and $\delta< 1/10$, 
  \begin{align}
    r_0 < \frac{11}{6\alpha_0} < 1.
  \end{align}
  Furthermore, since $r_0<1$ and $\alpha_0\geq 4$, we have 
  \begin{align}
    \zeta & > 1 - \alpha_0(r_0)^2    
           > 1 - \frac{1}{\alpha_0}\left(\frac{11}{6}\right)^2 
            \geq \frac{23}{144}.\label{eq:zeta-lower-bound}
  \end{align}
  Finally, since $\delta,\zeta,\pinc\in(0,1)$, we have from the
  definitions of $r_0$ and $q$ that
  \begin{align}
    0 < q < r_0.\label{eq:q-r-rel}
  \end{align}
  From the definition of the functions $\theta_1$, $\eta$, and the
  constant $\delta\in(0,1/10)$, it follows that
  \begin{align*}
    \theta_1(0) & < \eta(0)
      < \eta(1)
     < \frac{1}{2A}\left(1 - \frac{1}{\alpha_0\pinc}\right)
     < \frac{1}{2}.
  \end{align*}
  Also, we have from the definition of $q$, the constraint
  $\alpha_0\geq 4$ from Theorem~\ref{thm:k-lo}, and the bound
  $\zeta>23/144$ from (\ref{eq:zeta-lower-bound}) that
  \begin{align*}
    \theta_2(0)
    & > \theta_2(1) 
     > 1-q
     = 1-\frac{1-\zeta}{\alpha_0}
     > 1-\frac{1-\frac{23}{144}}{4}
      = \frac{455}{576}.
  \end{align*}
  Thus, we have proven statement (\ref{item:r0}).

  Statement (\ref{item:r0b}) follows directly from
  Lemma~\ref{lemma:inv-bound}, the definition of $\theta_2$ and the
  constant $q$,
  \begin{align*}
    \theta_2(j) & = 1-q^{1/j}\geq \ln(1/q)/j = \Omega(1/j).
  \end{align*}

  For statement (\ref{item:r0c}), we define
  $c:=\frac{1+\delta}{\alpha_0\pinc}<1$, and observe that the inequality $e^{x}\geq 1+x$ implies 
  \begin{align*}
    \theta_1(j)< 1-c^{1/j} = 1-e^{(1/j)\ln(c)} \leq -(1/j)\ln(c)=O(1/j).
  \end{align*}
  
  For statement (\ref{item:r1}), first note that Eq. (\ref{eq:q-r-rel}) and the assumption $\pinc<2/5$ imply 
  \begin{align*}
    0 < q < r_0 < \frac{1+\delta}{\alpha_0\pinc}.
  \end{align*}
  For $j\geq 1$, we therefore have $1/j>0$ and 
  \begin{align*}
    \theta_2(j)
     = 1-q^{1/j}
     \geq 1-\left(\frac{1+\delta}{\alpha_0\pinc}\right)^{1/j}
     \geq A\eta(j).
  \end{align*}
  For $j=0,$ the definition of $\eta(0)$, statement
  (\ref{item:r1}) for the case $j=1$ shown above, and the definition of
  $\theta_2(0)$ give
  \begin{align*}
    A\eta(0) & = \eta(1)
              \leq \frac{\theta_2(1)}{A}
              = \frac{b\theta_2(0)}{A}
              < \theta_2(0).
  \end{align*}
  
  Statement (\ref{item:r2}) follows from the definition of $\theta_1(j)$.

  We now show (\ref{item:r3}). The statement is true by definition for
  $j=0$, so assume that $j\geq 1$. We first derive an upper bound on the parameter $b$ in
  terms of the constant $q$.  In particular, the constraint on $b$ from Theorem \ref{thm:k-lo}
  and the relationship $0<q<r_0$ from (\ref{eq:q-r-rel}) give
  \begin{align}
    b & \leq\frac{1}{1 + \sqrt{r_0}} 
       < \frac{1}{1 + \sqrt{q}} 
      = \frac{1 - \sqrt{q}}{1 - q}.\label{eq:b-bound-3}
  \end{align}
  The right hand side of (\ref{eq:b-bound-3}) can be further bounded
  by observing that the function $g(j):=\frac{1 - q^{1/(j+1)}}{1 -
    q^{1/j}}$ with $q>0$ increases monotonically with respect to $j$. Thus,
  for all $j\in\mathbb{N}$, we have
  \begin{align}
    \label{eq:b-bound-4}
    b < \frac{1 - \sqrt{q}}{1 - q}
    &\leq\frac{1 - q^{1/(j+1)}}{1 - q^{1/j}}.
  \end{align}
  This upper bound on parameter $b$ now immediately leads to the desired result
  \begin{align}
    b\theta_2(j)
                    & = b(1-q^{1/j})
                     \leq 1-q^{1/(j+1)}
                     = \theta_2(j+1).
  \end{align}

  Statement (\ref{item:r3b}) follows by applying the previous three
  statements in the order
  (\ref{item:r2}), (\ref{item:r1}), and (\ref{item:r3})
  \begin{align*}
    A\theta_1(j) \leq Ab\eta(j)
               \leq b\theta_2(j)
               \leq \theta_2(j+1).
  \end{align*}
  
  Next we prove statement (\ref{item:r4}). The statement is trivially
  true for $j=0$, because $r(0,A\chi)=1$, so assume that $j\geq 1$.
  By the assumption
  $\chi / n \leq \eta(j)$ and the definition of $\eta(j)$, 
  \begin{align*}
    r(j, A\chi) & = \left(1-\frac{A\chi}{n}\right)^j
     \geq \left(1-A\eta(j)\right)^j\\
    &\geq \left(1 - \left(1 - \left(\frac{1+\delta}{\alpha_0\pinc}\right)^{1/j}\right)\right)^j
      = \frac{1+\delta}{\alpha_0\pinc}.
  \end{align*}

  Finally, we prove statement (\ref{item:r5}). Again, the statement is
  trivially true for $j=0$, because $r(0,b\chi)=1$, so assume that
  $j\geq 1$.
  We derive an alternative upper
   bound on parameter $b$ in terms of $r_0$ and $q$. By the constraint on
   $b$ in Theorem \ref{thm:k-lo},
  \begin{align}
    b
    & \leq \frac{1}{1+\sqrt{r_0}} 
     = \frac{\ln(r_0)}{\ln(r_0)+\sqrt{r_0}\ln(r_0)} \nonumber\\
    & = \frac{\ln(r_0)}{\ln\left(r_0r_0^{\sqrt{r_0}}\right)} 
      = \frac{\ln r_0}{\ln q}. \label{eq:b-bound-0}
  \end{align}

  Furthermore, note that the function
  $h(j):=\frac{1 - r_0^{1/j}}{1 - q^{1/j}}$ 
  decreases
  monotonically with respect
  to $j$ when $r_0 > q > 0$, and has the limit
  $\lim_{j\rightarrow\infty}h(j)= \ln(r_0) / \ln(q)$. 
  Using (\ref{eq:b-bound-0}), it therefore
  holds for all $j\in\N$ that
  \begin{equation}
    \label{eq:b-bound}
    b \leq
    \frac{\ln r_0}{\ln q}
    \leq\frac{1 - r_0^{1/j}}{1 - q^{1/j}}.
  \end{equation}

  The assumption
  $\chi / n \leq \theta_2(j)$, the definition of $\theta_2(j)$, and
  (\ref{eq:b-bound}) now give
  \begin{align*}
    r(j, b\chi) & = \left(1 - \frac{b\chi}{n}\right)^j 
                \geq \left(1 - b\theta_2(j)\right)^j\\
               &=  \left(1 - b\left(1 -q^{1/j}\right)\right)^j
                \geq \left(1 - \left(1 - r_0^{1/j}\right)\right)^j
                = r_0,
  \end{align*}
  which completes the proof of statement (\ref{item:r5}).  
\end{proof}

\todo{}
Using Lemma \ref{lemma:r-bounds}, we are now in a position to prove that
the levels satisfy condition (G2). If the mutation rate $\chi/n$
is above the intermediary value $\eta(j)$, there is a sufficiently
high probability of reducing the mutation rate while maintaining the
fitness. Conversely, if the mutation rate is below the intermediary
value $\eta(j)$, there is a sufficiently high probability of
increasing the mutation rate while maintaining the fitness.
  \begin{lemma}
    \label{lemma:can-stay}
    Assume that the parameters $A, b,$ and $\pinc$ satisfy the
    constraints in Theorem~\ref{thm:k-lo}. Then there exists a
    constant $\delta\in(0,1/10)$ such that for all $j\in[0..k-1]$ and
    all $\ell\in[d_j]$, if Algorithm~\ref{algo:stepwise-sa} in step
    \ref{algo:select} selects a parent $(x, \chi/n)\in A_{(j, \ell)}$,
    then the offspring $(x', \chi'/n)$ created in steps
    \ref{algo:adapt} and \ref{algo:mutate} of the algorithm satisfies
    \[\prob\left((x', \chi'/n)\in \A{(j, \ell)}\right)\geq \frac{1 + \delta}{\alpha_0}.\]
  \end{lemma}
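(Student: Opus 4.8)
The plan is to decompose the event ``the offspring lands in $\A{(j,\ell)}$'' according to the adaptation outcome in step~\ref{algo:adapt} (increase with probability \pinc, decrease with probability $1-\pinc$) together with the mutation outcome in step~\ref{algo:mutate}, and to split the analysis according to whether the selected parent's mutation rate lies below or above the intermediary value $\eta(j)$. Before the case analysis I would record an explicit description of the target set: unfolding Definition~\ref{def:sublevels} together with the union computed in the proof of Lemma~\ref{lemma:levels-cover-searchspace} gives
\[
  \A{(j,\ell)} = \Bigl(A_j\times[L_\ell,\,m_j]\Bigr)\cup\Bigl(A_{>j}\times[\ep,\,m_j]\Bigr),
\]
where $m_j:=\min(1/2,\theta_2(j))$, and $L_\ell=A^{\ell-1}\ep$ when $\ell<d_j$ while $L_{d_j}=\theta_1(j)$. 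Thus an offspring $(x',\chi'/n)$ belongs to $\A{(j,\ell)}$ as soon as it retains at least $j$ leading $1$-bits and its mutation rate stays in the admissible window $[L_\ell,m_j]$ (or merely $[\ep,m_j]$ when the fitness strictly improves); crucially the upper limit $m_j$ is common to both fitness cases, so only the lower limit has to be tracked.

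In the first case, $\chi/n\leq\eta(j)$, I would condition on the increase outcome, so $\chi'/n=\min(A\chi/n,1/2)$. Monotonicity of $r(j,\cdot)$ together with the cap gives $r(j,\chi')\geq r(j,A\chi)\geq(1+\delta)/(\alpha_0\pinc)$ by Lemma~\ref{lemma:r-bounds}~(\ref{item:r4}), which is exactly the probability that the first $j$ bits survive, i.e.\ that $x'\in A_{\geq j}$. I would then verify the admissible window: the upper bound $\chi'/n\leq m_j$ follows from $A\chi/n\leq A\eta(j)\leq\theta_2(j)$ (Lemma~\ref{lemma:r-bounds}~(\ref{item:r1})) together with the cap at $1/2$, while the lower bound $\chi'/n\geq L_\ell$ follows from $A\chi/n\geq A^{\ell}\ep\geq L_\ell$ for low levels and from $A\chi/n\geq A\theta_1(j)>\theta_1(j)$ for the edge level (every edge-level parent has $\chi/n\geq\theta_1(j)$). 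Hence every surviving offspring lands in $\A{(j,\ell)}$, and multiplying by \pinc\ yields total probability at least $(1+\delta)/\alpha_0$.

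In the second case, $\chi/n>\eta(j)$, note that $\eta(j)>b\eta(j)=\theta_1(j)$ forces the parent into the edge level $\ell=d_j$, and I would condition on the decrease outcome, so $\chi'/n=\max(b\chi/n,\ep)$. The key observation is that $b\chi/n>b\eta(j)=\theta_1(j)>\ep$, so the floor is inactive and $\chi'/n=b\chi/n$ lies in $(\theta_1(j),m_j]$ — the admissible window for both fitness outcomes, using $b\chi/n\leq b\theta_2(j)<\theta_2(j)$ and $b\chi/n<1/2$. Survival of the first $j$ bits has probability $r(j,b\chi)\geq(1+\delta)/(\alpha_0(1-\pinc))$ by Lemma~\ref{lemma:r-bounds}~(\ref{item:r5}), applicable because $\chi/n\leq\theta_2(j)$ on the edge level. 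Multiplying by $1-\pinc$ again gives $(1+\delta)/\alpha_0$, and since $\delta\in(0,1/10)$ is the constant furnished by Lemma~\ref{lemma:r-bounds}, both cases close the argument.

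The main obstacle is handling the component $A_{>j}\times(\theta_2(j+1),\theta_2(j)]$ of the edge level, i.e.\ parents whose true fitness exceeds $j$ but which have been relegated to level $j$ because their mutation rate is too large. For such parents one must argue via preservation of the first $j$ bits only — not of all $j''>j$ leading ones — since Lemma~\ref{lemma:r-bounds}~(\ref{item:r5}) controls only $r(j,\cdot)$; the explicit form of $\A{(j,\ell)}$ recorded above is precisely what makes ``keep at least $j$ leading ones'' a sufficient condition. The secondary subtlety is the bookkeeping around the cap $\min(A\chi,n/2)$ and floor $\max(b\chi,\ep n)$: the cap can only increase survival but must be checked not to push $\chi'/n$ above $m_j$ (resolved by (\ref{item:r1})), and the floor must be shown inactive in the decrease branch (resolved by $\theta_1(j)>\ep$), so that the survival bounds of Lemma~\ref{lemma:r-bounds} apply verbatim rather than requiring a separate estimate at mutation rate $\ep$.
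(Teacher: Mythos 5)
Your proof is correct and follows essentially the same route as the paper's: the same case split at the intermediary rate $\eta(j)$, pessimistically conditioning on the increase branch below $\eta(j)$ (via Lemma~\ref{lemma:r-bounds}~(\ref{item:r1}) and~(\ref{item:r4})) and on the decrease branch above it (via the definition of $\theta_1$ and~(\ref{item:r5})), then multiplying by $\pinc$ or $1-\pinc$ to get $(1+\delta)/\alpha_0$. Your additions --- the explicit formula for $\A{(j,\ell)}$ in place of the paper's stronger sub-event ``$x'\in A_{\geq j}$ and $\min\{\chi/n,\theta_1(j)\}\leq \chi'/n\leq\theta_2(j)$'', the observation that $\chi/n>\eta(j)$ forces $\ell=d_j$, and the checks that the cap $\min(A\chi,n/2)$ and floor $\max(b\chi,\ep n)$ do not interfere --- are harmless refinements of the same argument rather than a different approach.
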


  \begin{proof}
    We will prove the stronger statement that with probability
    $(1+\delta)/\alpha_0$, we have simultaneously 
    \begin{align}
      x'\in A_{\geq j}\quad \text{and}\quad \min\left\{\frac{\chi}{n},\theta_1(j)\right\}\leq \frac{\chi'}{n}\leq \theta_2(j).\label{eq:lemma-can-stay-event}
    \end{align}
    The event (\ref{eq:lemma-can-stay-event}) is a subset of
    the event $(x',\chi'/n)\in\A{(j, \ell)}$, because a lower level $A_{(j,\ell)}$ may
    contain search points $(x',\chi'/n)$ with mutation rates
    $\chi'/n<\min(\chi/n,\theta_1(j))$.
    
    By Definition~\ref{def:sublevels}, the parent satisfies
    $x\in A_{\geq j}$ and $ \chi / n \leq \theta_2(j)$. We
    distinguish between two cases.
    
    \underline{Case 1: $\chi / n \leq \eta(j)$.}
    By Lemma~\ref{lemma:r-bounds} (\ref{item:r0}), and the monotonicity
    of $\eta$, we have $\eta(j)<1/2$.
    Note that in this case, it is still ``safe'' to increase the
    mutation rate. 
    For a lower bound, we therefore pessimistically only account for offspring where the
    mutation parameter is increased from $\chi<n/2$ to $\min(A\chi,n/2)$.
    Note first that since $A>1,$ we have
    \begin{align*}
      \frac{\chi'}{n}=\frac{\min(A\chi,n/2)}{n}>\frac{\chi}{n}.
    \end{align*}
    Also, Lemma \ref{lemma:r-bounds} (\ref{item:r1}) implies the upper bound
    \begin{align*}
      \frac{\chi'}{n}\leq \frac{A\chi}{n} \leq A\eta(j) \leq \theta_2(j).
    \end{align*}

    To lower bound the probability that $x'\in A_{\geq j}$, we
    consider the event where the mutation rate is increased, and the
    event that none
    of the first $j$ bits in the offspring are mutated with the new
    mutation parameter $\min(A\chi,n/2)$. By
    definition of the algorithm and
    using Lemma \ref{lemma:r-bounds} (\ref{item:r4}), these two events occur with probability 
    at least
    \begin{equation}
      \label{eq:inc-2}
      \pinc r(j, A\chi) \geq (1 + \delta)/\alpha_0.
    \end{equation}

    \underline{Case 2: $\eta(j) < \chi / n \leq \theta_2(j)$.}
    Note that in this case, it may be ``unsafe'' to increase the
    mutation rate. 
    For a lower bound, we pessimistically only consider
    mutation events where the mutation parameter is decreased
    from $\chi$ to $b\chi$.
    Analogously to above, since $b<1$, we have
    \begin{align}
      \frac{\chi'}{n}=\frac{b\chi}{n} < \frac{\chi}{n} \leq \theta_2(j).
    \end{align}
    Furthermore, Lemma~\ref{lemma:r-bounds} (\ref{item:r2}) implies
    the lower bound
    \begin{align}
      \frac{\chi'}{n}=\frac{b\chi}{n} > \frac{b\eta(j)}{n} \geq \theta_1(j).
    \end{align}
    
    To lower bound the probability that  $x'\in A_{\geq j}$, we
    consider the event where the mutation parameter is decreased from
    $\chi$ to $b\chi$, and the offspring $x'$ is not downgraded to a
    lower level.
    By definition of the algorithm, $r(j, b\chi)$, and using
    Lemma~\ref{lemma:r-bounds} (\ref{item:r5}), these two events occur with
    probability
    \begin{equation}
      (1-\pinc) r(j,b\chi) \geq (1+\delta)/\alpha_0.
    \end{equation}

    Hence, we have shown that in both cases, the event in Eq.
    (\ref{eq:lemma-can-stay-event}) occurs with probability at least
    $(1+\delta)/\alpha_0$, which completes the proof.
  \end{proof}

  We now show that the edge levels satisfy condition (G1) of the
  level-based theorem. As we
  will show later, the upgrade probability for non-edge levels is
  constant.

  \begin{lemma}
    \label{lemma:upgrade-prob}
    Assume that the parameters $b$ and $\pinc$ satisfy the constraints
    in Theorem \ref{thm:k-lo}. Then for any $j\in[0..k-1]$, and any
    search point $(x, \chi/n)\in A_{(j, d_j)}$ selected in step
    \ref{algo:select} of Algorithm \ref{algo:stepwise-sa} applied to
    \leadingones, the offspring
    $(x', \chi'/n)$ created in steps \ref{algo:adapt} and
    \ref{algo:mutate} satisfies
    $%\begin{equation*}
      \prob\left((x', \chi'/n)\in \A{(j+1, 1)}\right)= \Omega(1/j).
    $
  \end{lemma}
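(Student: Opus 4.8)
The plan is to lower-bound the probability that a parent in the edge level $A_{(j,d_j)}$ produces an offspring in $\A{(j+1,1)}$ by isolating a single favourable adaptation-and-mutation event and estimating its probability. Since the parent satisfies $(x,\chi/n) \in A_{(j,d_j)}$, we know $x \in A_j$ (or $x \in A_{>j}$ with appropriately high mutation rate, but the worst case for an upgrade is $x \in A_j$ with exactly $j$ leading ones) and $\chi/n \in [\theta_1(j), \min(1/2,\theta_2(j))]$. The target set $\A{(j+1,1)}$ requires the offspring to have fitness at least $j+1$ \emph{and} a mutation rate that lands in or above the lowest sublevel for fitness $j+1$, namely $\chi'/n \ge \ep$, which is automatic. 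The key is to engineer an event in which the $(j+1)$-st bit flips to a one while the first $j$ bits are preserved.

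First I would decide which adaptation branch to condition on. Because increasing the mutation rate risks overshooting (and because we want the offspring's mutation rate to fit cleanly into a sublevel of fitness $j+1$), I would condition on the \emph{decrease} branch, which occurs with probability $1-\pinc = \Omega(1)$, giving a new mutation parameter $b\chi$. I would then bound the probability of the mutation event that simultaneously (a) keeps the first $j$ bits as ones, (b) flips the $(j+1)$-st bit from zero to one, and (c) is otherwise unconstrained on the remaining bits. Factoring the independent bitwise flips, this probability is
\begin{align*}
  \left(1 - \frac{b\chi}{n}\right)^{j} \cdot \frac{b\chi}{n}.
\end{align*}
The first factor is exactly $r(j, b\chi)$, which by Lemma~\ref{lemma:r-bounds}~(\ref{item:r5}) is at least $r_0 = \Omega(1)$ since $\chi/n \le \theta_2(j)$. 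The second factor is the upgrade term: because the parent lies in the edge level, $\chi/n \ge \theta_1(j)$, and by Lemma~\ref{lemma:r-bounds}~(\ref{item:r0b})--(\ref{item:r0c}) both $\theta_1(j) = O(1/j)$ and $\theta_2(j) = \Omega(1/j)$, so $b\chi/n = \Theta(1/j)$. Multiplying a constant survival factor by a $\Theta(1/j)$ upgrade factor and the constant branch probability $1-\pinc$ yields the claimed $\Omega(1/j)$.

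The remaining obligation, and the step I expect to be the main obstacle, is verifying that the resulting offspring mutation rate $b\chi/n$ actually places the offspring in $\A{(j+1,1)}$ rather than in some level incomparable to it under the lexicographic order. Since the offspring has fitness $\ge j+1$, I must confirm its mutation rate $b\chi/n$ does not exceed $\theta_2(j+1)$ (otherwise the level definition, via the edge-level construction in \eqref{eq:edge-level-def}, would assign it to a \emph{lower} fitness index $j' \le j$, defeating the upgrade). Here I would invoke Lemma~\ref{lemma:r-bounds}~(\ref{item:r3}): since $\chi/n \le \theta_2(j)$, we have $b\chi/n \le b\theta_2(j) < \theta_2(j+1)$, so the offspring's mutation rate is safely within the range that keeps it attributed to fitness level $j+1$ or higher. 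Combining the survival bound, the upgrade bound, and this containment argument completes the estimate; I would also note in passing that the lower endpoint is not an issue because $b\chi/n \ge b\,\theta_1(j) \ge \ep$ for $\ep$ small enough, so the offspring is not pushed below the valid mutation-rate range.
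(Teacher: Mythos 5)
Your proposal follows essentially the same route as the paper's proof: condition on the mutation-rate decrease branch (probability $1-\pinc$), lower-bound the survival factor $\left(1-\tfrac{b\chi}{n}\right)^{j}$ by a constant, lower-bound the flip probability $\tfrac{b\chi}{n}$ by $\Omega(1/j)$ via the edge-level lower endpoint $\theta_1(j)$, and invoke Lemma~\ref{lemma:r-bounds}~(\ref{item:r3}) to ensure $b\chi/n\leq\theta_2(j+1)$, so the fitter offspring is attributed to fitness index $j+1$ rather than some $j'\leq j$. Your treatment of this containment issue matches the paper exactly, and your parenthetical dismissal of parents in $A_{>j}$ as the easier case is sound: since $b\chi/n<1/2$, keeping the first $j+1$ bits is at least as likely as keeping $j$ bits and flipping one, which is why the paper's explicit case distinction (bounding $1-b\chi/n\geq 1-b\theta_2(0)=\Omega(1)$ when bit $j+1$ is already a one) yields nothing worse.

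There is, however, one genuine flaw in the justification of the central estimate. You derive $b\chi/n=\Theta(1/j)$ from items (\ref{item:r0b})--(\ref{item:r0c}) of Lemma~\ref{lemma:r-bounds}, but these state $\theta_2(j)=\Omega(1/j)$ and $\theta_1(j)=O(1/j)$ --- bounds in exactly the wrong directions for your purpose. From $\chi/n\geq\theta_1(j)$ together with the \emph{upper} bound $\theta_1(j)=O(1/j)$ nothing follows; what the argument requires is the \emph{lower} bound $\theta_1(j)=\Omega(1/j)$, which is not among the lemma's listed statements. The fact is nevertheless true, and the paper proves it inline from the definition: by Eq.~(\ref{eq:theta1-def}),
\begin{align*}
  b\theta_1(j)=\frac{b^2}{2A}\left(1-\left(\frac{1+\delta}{\alpha_0\pinc}\right)^{1/j}\right)\geq \frac{b^2}{2Aj}\ln\left(\frac{\alpha_0\pinc}{1+\delta}\right)=\Omega(1/j),
\end{align*}
using Lemma~\ref{lemma:inv-bound} and the constraint $\pinc>(1+\delta)/\alpha_0$. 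With this substitution (noting also that only the lower bound $\Omega(1/j)$, not $\Theta(1/j)$, is needed, and that the case $j=0$ requires the separate constant definitions of $\theta_1(0)$ and monotonicity of $\theta_1$, as the paper notes), your argument closes and coincides with the paper's.
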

  \begin{proof}
    By the definition of level $A_{(j, d_j)}$, we
    have $\theta_1(j)\leq \chi/n \leq \theta_2(j)$ and so by Lemma~\ref{lemma:r-bounds} (\ref{item:r3}),
    we have $b\chi/n \leq \theta_2(j+1)$. 
    Given the definition of levels $\A{(j+1, 1)}$, it suffices for a
    lower bound to only consider the probability
    of producing an offspring $(x',\chi'/n)$ with lowered mutation rate
    $\chi'/n=b\chi/n\leq \theta_2(j+1)$ and fitness
    $\textsc{LO}_k(x')\geq j+1$.

    We claim that if the mutation rate is lowered, 
    the offspring has fitness $\textsc{LO}_k(x')\geq j+1$ 
    with probability $\Omega(1/j).$  Since the parent belongs to level
    $A_{(j,d_j)}$, it has fitness $\textsc{LO}_k(x)\geq j$, so we need
    to estimate the probability of not flipping the first $j$ bits,
    and obtain a 1-bit in position $j+1$.

    We now estimate the probability of obtaining a 1-bit in position
    $j+1$, assuming that the parent $x$ already has a 1-bit in this
    position, for any $j\in[0..k-1]$.
    Using that $\theta_2(j)$ decreases monotonically in $j$,
    the definition of $\theta_2(0)$, and the lower bound on the
    parameter $\zeta > 23 / 144$ from
    Eq. (\ref{eq:zeta-lower-bound}),
    the probability of not mutating
    bit-position $j+1$ with the lowered mutation rate $b\chi / n$ is 
    \begin{align*}
      1-\frac{b\chi}{n}
       &\geq 1-b\theta_2(j)
       \geq 1-b\theta_2(0)\\
       &= 1 - \theta_2(1)
       = 1 - \frac{1-\zeta}{\alpha_0}
       > 1 - \frac{1-\frac{23}{144}}{\alpha_0}
       = \Omega(1).
    \end{align*}

    If the parent $x$ does not have a 1-bit in position $j+1$, we need
    to flip this bit-position. By the
    definition of $\theta_1(j)$ in Eq. (\ref{eq:theta1-def}), the
    probability of this event is in the case $j\geq 1$ 
    \begin{align}
      \frac{b\chi}{n}
      & \geq b\theta_1(j)\label{eq:bchi-bound-start}
        = \frac{b^2}{2A}\left(1 - \left(\frac{1+\delta}{\alpha_0\pinc}\right)^{1/j}\right)\\
      & \geq \frac{b^2}{2Aj} \ln \left(\frac{\alpha_0\pinc}{1+\delta}\right)
        = \Omega(1/j),\label{eq:bchi-bound}
    \end{align}
    where the last inequality follows from
    Lemma~\ref{lemma:inv-bound}. If $j=0$, we use that $\theta_1(j)$
    decreases monotonically in $j$ and Eqs. (\ref{eq:bchi-bound-start})--(\ref{eq:bchi-bound}) to
    show that the probability of flipping bit $j+1=1$ is
    \begin{align*}
      \frac{b\chi}{n} \geq b\theta_1(0) > b\theta_1(1) = \Omega(1).
    \end{align*}
    The claim that we obtain a 1-bit in position $j+1$ with probability
    $\Omega(1/j)$ is therefore true.
    
    Thus, the probability of lowering the mutation rate to $b\chi / n$, 
    obtaining a 1-bit in position $j+1$, and not flipping the first
    $j$ positions is, using the definition of $\theta_2(j)$ in
    (\ref{eq:theta2-def}),
    \begin{multline*}
      (1-\pinc) \Omega(1/j)\left(1 - \frac{b\chi}{n}\right)^j\\
       > \Omega(1/j) \left(1 - \theta_2(j)\right)^j
       = \Omega(1/j) \left(\frac{1 - \zeta}{\alpha_0}\right)
      = \Omega(1/j),
    \end{multline*}
    which completes the proof.
  \end{proof}

  \subsection{Individuals with Too High Mutation Rates}
  \label{sec:control-bad}

  Highly fit individuals with incorrect parameter settings can cause
  problems for self-adaptive EAs. If there are too many such ``bad''
  individuals in the population, they may dominate the population,
  propagate bad parameter settings, and thus impede progress. In this
  section, we therefore bound the number of such bad individuals.
  We define a region $B\subset\extsspace$ containing search points
  with a mutation rate that is too high relative to their fitness. For the
  constant $\zeta\in(0,1)$ defined in Eq. (\ref{eq:zeta-def}), let
  \begin{align}
    \label{eq:B-def}
    B &:= \Big\{(x, \chi/n)\in A_j\times [\ep,1/2] \mid \\
    & \quad j\in\mathbb{N}_0
    \;\wedge\; \forall y\in\mathcal{X}
      \prob_{x'\sim\pmut(y, \chi)}(x'\in A_{\geq j})<\frac{1-\zeta}{\alpha_0}
    \Big\}.\nonumber
  \end{align}
  Note that by the definition of the function $\theta_2(j)$, it holds
  for all $y\in\sspace$ that the statement
  $\prob_{x'\sim\pmut(y, \chi)}(x'\in A_{\geq j}) = (1 - \chi/n)^j <
  (1 - \zeta)/\alpha_0$ is analogous to $\chi/n >
  \theta_2(j)$. Therefore, the region $B$ can also be expressed as
  \begin{align}
    \label{eq:B-alt}
    B = \bigcup_{j=0}^{k-1} A_{>j}\times \left(\min \left(\frac{1}{2},\theta_2(j+1)\right), \min\left(\frac{1}{2},\theta_2(j)\right)\right].
  \end{align}
  
  An individual $(x, \chi/n)\in B$ is said to have too high mutation
  rate. To see why, recall that the number of offspring of
  $(x, \chi/n)$ is never more than $\alpha_0$. Since the probability
  an offspring has fitness as least as good as $f_k(x)$ is less than
  $1/\alpha_0$, in expectation less than one offspring maintains the
  fitness, making it unlikely a lineage of $(x, \chi/n)$ will be able
  to make progress towards the optimum. This corresponds to the
  ``error threshold'' discussed in \cite{bib:Lehre2010}; by Corollary
  1 in \cite{bib:Lehre2010}, the probability that a group of
  individuals of size $\text{poly}(k)$ staying in $B$ will optimise
  $\textsc{LO}_k$ in sub-exponential time is $e^{-\Omega(k)}$.

  The levels given by Definition \ref{def:sublevels} are not disjoint
  from the region $B$ defined above. This is an important departure
  from the approach used in \cite{bib:Dang2016}, where $B$ is
  effectively removed from the search space and the level-based
  theorem is applied to a partition over $\mathcal{Y}\setminus
  B$. This is not effective in our setting, since an individual can
  have a sudden increase in fitness but can not significantly decrease
  mutation rate. Such an individual may have too high mutation rate
  with respect to its new fitness, but we still depend on this
  individual having correctly tuned mutation with respect to the old
  fitness value. Therefore, an individual in some $A_{(j, \ell)}$ may
  mutate in and out of $B$ before its mutation rate has been adapted
  to maintain a fitness higher than $j$. While individuals may
  occasionally jump into the bad region, if too many individuals are
  in $B$ at a given time this may destroy the progress of the
  algorithm. In particular, if $|P_t \cap B| > \mu$, then assuming all
  individuals in $B$ have strictly better fitness and higher mutation
  rate than those not in $B$, only individuals from $B$ will be
  selected for mutation and the next generation will consist only of
  individuals with very high mutation rate. Therefore, it is critical
  that for any generation $t\in\N$, the number of individuals in $B$
  will be less than $\mu$ with overwhelmingly high probability. We
  prove this with the following lemma, which is similar to Lemma 2
  from \cite{bib:Dang2016}, but with the notable difference that the
  size of $B$ can be controlled within a single generation, regardless
  the configuration of $P_{t-1}$.

  \begin{lemma}\label{lemma:few-bad}
    Let $B\subseteq\extsspace$ be as defined in Eq. (\ref{eq:B-def}) for
    a constant $\zeta\in(0,1)$. Then for any generation $t\in\mathbb{N}$ of
    Algorithm \ref{algo:stepwise-sa} applied to \leadingones,
    \[\prob(|B \cap P_t|\geq(1-\zeta/2)\mu)\leq e^{-\Omega(\mu)}.\]
  \end{lemma}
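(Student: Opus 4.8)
The plan is to bound the number of individuals landing in $B$ in a single generation using a concentration inequality, exploiting the fact that each of the $\lambda$ offspring is created independently given $P_{t-1}$. The key structural observation is that an offspring $(x',\chi'/n)$ enters $B$ only if it has fitness exactly $j$ (for some $j$) while its mutation rate exceeds $\theta_2(j)$, and by the definition of $B$ in Eq.~(\ref{eq:B-def}), any \emph{parent} $y$ producing such an offspring satisfies $\prob_{x'\sim\pmut(y,\chi')}(x'\in A_{\geq j}) < (1-\zeta)/\alpha_0$. First I would fix an arbitrary generation and define indicator variables $X_i := [P_t(i)\in B]$ for $i\in[\lambda]$, so that $|B\cap P_t| = \sum_{i=1}^\lambda X_i$. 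Since the offspring are generated independently in the inner loop of Algorithm~\ref{algo:stepwise-sa}, the $X_i$ are independent Bernoulli variables conditioned on $P_{t-1}$.

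The crux is to show $\expect{\sum_i X_i \mid P_{t-1}}$ is small, specifically at most roughly $(1-\zeta)\mu$, regardless of the configuration of $P_{t-1}$. I would argue this by grouping: for the $i$-th offspring to land in $B$ at fitness level $j$, the selected parent must have had fitness at least $j$ (fitness cannot increase beyond what a non-flipped prefix allows), and crucially the probability that the resulting mutation keeps fitness $\geq j$ while the rate lands above $\theta_2(j)$ is controlled. The defining property of $B$ gives that the chance of maintaining fitness $\geq j$ under such a high mutation rate is strictly below $(1-\zeta)/\alpha_0$. Since the reproductive rate bounds the expected number of times any single parent is selected by $\alpha_0 = \lambda/\mu$, summing the per-parent contributions over the population yields $\expect{|B\cap P_t|\mid P_{t-1}} \le \lambda\cdot(1-\zeta)/\alpha_0 = (1-\zeta)\mu$. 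The phrasing in the lemma statement that the size of $B$ is controlled ``within a single generation, regardless of the configuration of $P_{t-1}$'' signals that this bound must hold for \emph{every} $P_{t-1}$, which is exactly what the uniform threshold in the definition of $B$ provides.

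With the conditional expectation bounded by $(1-\zeta)\mu$, I would finish by applying a multiplicative Chernoff bound to the independent indicators $X_i$. The target event is $|B\cap P_t| \geq (1-\zeta/2)\mu$, and since $(1-\zeta/2)\mu = (1+\tfrac{\zeta/2}{1-\zeta})(1-\zeta)\mu$ exceeds the mean by a constant multiplicative factor (as $\zeta$ is a constant in $(0,1)$), Chernoff gives a deviation probability of $e^{-\Omega(\mu)}$. Because $\mu = \Omega(\log n)$ by the hypotheses of Theorem~\ref{thm:k-lo}, this is an overwhelmingly small probability, and since the bound holds conditionally on every $P_{t-1}$, it holds unconditionally as well.

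The main obstacle I anticipate is rigorously converting the per-offspring selection-and-mutation event into the clean bound $\expect{X_i\mid P_{t-1}} \le (1-\zeta)/\alpha_0$, because the event ``offspring lands in $B$'' couples the random parent choice (via $I_t(i)$), the random adaptation step (multiplying by $A$ or $b$), and the random bit-flips, and the appropriate fitness level $j$ is itself determined by the outcome. The careful part is arguing that no matter which parent is selected and which mutation rate results, landing in $B$ at level $j$ forces the survival probability below $(1-\zeta)/\alpha_0$ by the very definition of $B$, so that one can decouple the reproductive-rate accounting from the mutation analysis. I expect the reproductive rate $\alpha_0$ to enter precisely here, absorbing the over-selection of any fit parent, which is why the uniform definition of $B$ over \emph{all} $y\in\sspace$ rather than just the actual parent is essential.
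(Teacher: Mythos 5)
Your proposal is correct and takes essentially the same route as the paper: establish a per-offspring probability bound of $(1-\zeta)/\alpha_0$ for landing in $B$ that holds uniformly over $P_{t-1}$ and over the selected parent, use the conditional independence of the $\lambda$ offspring to stochastically dominate $|B\cap P_t|$ by $\bin\left(\lambda,\frac{1-\zeta}{\alpha_0}\right)$ with mean $(1-\zeta)\mu$, and finish with a multiplicative Chernoff bound via exactly the factor $1+\frac{\zeta/2}{1-\zeta}$ you compute (which in fact corrects a small typo in the paper's displayed exponent). Two remarks on the details. First, your parenthetical claim that ``the selected parent must have had fitness at least $j$'' is false for \leadingones: an offspring can enter $B$ at a fitness level $u$ strictly above its parent's level $j$ by flipping the $0$-bit at position $j+1$. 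The paper handles this jump case explicitly, bounding its probability by $\left(1-\frac{\chi'}{n}\right)^{u-1}\frac{\chi'}{n} < \left(1-\frac{\chi'}{n}\right)^{u} < \frac{1-\zeta}{\alpha_0}$ using $\chi'/n\leq 1/2$; alternatively, your own closing observation already repairs the slip, since the definition of $B$ quantifies over \emph{all} $y\in\sspace$: given the realized $\chi'$, the event ``offspring in $B$'' collapses by monotonicity of $\theta_2$ (cf.\ Eq.~(\ref{eq:B-alt})) to the single event $x'\in A_{\geq u}$ for the unique $u$ with $\chi'/n\in(\theta_2(u),\theta_2(u-1)]$, and the quantified clause in Eq.~(\ref{eq:B-def}) bounds this probability below $(1-\zeta)/\alpha_0$ for every parent, fit or not. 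So the claim is inessential, but as written your ``grouping'' paragraph routes through it and would miss exactly the jump events. Second, the reproductive-rate accounting is an unnecessary detour: since the per-offspring bound holds no matter which parent is selected, you obtain $\expect{|B\cap P_t| \mid P_{t-1}} \leq \lambda\cdot\frac{1-\zeta}{\alpha_0} = (1-\zeta)\mu$ directly, without appealing to how often any single parent is chosen — and note that for the Chernoff step you need (and indeed have) independent per-trial probability bounds, not merely a bound on the conditional mean.
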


  \begin{proof}
        Consider some parent $(x,\chi/n)$ selected in generation $t-1\geq 0$
    and step \ref{algo:select} of Algorithm~\ref{algo:stepwise-sa}.
    Referring to steps \ref{algo:adapt} and \ref{algo:mutate}, first a
    new mutation parameter $\chi'$ is chosen, then a new bitstring
    $x'$ is obtained from $x$ using bitwise mutation with mutation
    parameter $\chi'$. To obtain an upper bound on the probability
    that $(x', \chi'/n)$ is in $B$, we proceed in cases based on the
    outcome of sampling $\chi'$, namely, whether the chromosome
    $(x, \chi'/n)$ is in $B$.

    \underline{If $(x, \chi'/n)\in B$:} then it follows immediately
    from the definition of $B$ that independently of the chosen parent
    $x$, it holds
    \begin{align}
      \label{eq:B-bound-1}
      \probb{(x',\chi'/n)\in B}
      < \frac{1 - \zeta}{\alpha_0}.
    \end{align}
    
    \underline{If $(x, \chi'/n)\not\in B$:} then for $(x',
    \chi'/n)$ to end up in
    $B$, by Eq. \eqref{eq:B-alt} it is necessary that $x'\in A_{\geq
      u}$ for some $u > j$, where $x\in A_j$ and $r(u, \chi') < (1 -
    \zeta)/\alpha_0$. Since $\chi'/n <
    1/2$, the probability of obtaining $x'\in A_{\geq
      u}$ is no more than
    \begin{align}
      \label{eq:B-bound-2}
      \left(1 - \frac{\chi'}{n}\right)^{u-1}\left(\frac{\chi'}{n}\right)
      < \left(1 - \frac{\chi'}{n}\right)^{u}
      < \frac{1 - \zeta}{\alpha_0}.                                        
    \end{align}

    Since each of the $\lambda$ individuals in population
    $P_t$ are sampled independently and identically,
    Eqs. \eqref{eq:B-bound-1} and \eqref{eq:B-bound-2} imply $|B\cap
    P_t|$ is stochastically dominated by a binomially distributed
    random variable
    $Z\sim\bin(\lambda,\zz)$ which has expectation
    $\mu(1-\zeta)$.  By a Chernoff bound,
    \begin{align*}
      \probb{|B \cap P_t|\geq \mu(1-\zeta/2)}
       \leq  \probb{Z\geq \mu(1-\zeta/2)}\\
       = \probb{Z\geq \expect{Z}\left(1+\tfrac{1}{2(1-\zeta)}\right)} 
        = e^{-\Omega(\mu)}.
    \end{align*}
  \end{proof}

  \subsection{Applying the Level-based Theorem}
  \label{sec:apply-level-based}

  We now combine the results of Sections \ref{sec:def-sublevels},
  \ref{sec:calc-probs}, and \ref{sec:control-bad} to prove
  Theorem~\ref{thm:k-lo} using Theorem \ref{thm:level-based}.

  \begin{proof}[Proof (of Theorem \ref{thm:k-lo})]
    We partition the search space $\extsspace$ into the sets
    $A_{(j, \ell)}$ from Definition \ref{def:sublevels}, where
    $j\in[0..k-1]$ and $\ell\in[d_j]$, along with $A_{(k,1)}$, and
    define $\A{(j,\ell)}$ as in (\ref{eq:Aorbetter-def}).

    We say that a generation $t$ is ``failed'' if the population $P_t$
    contains more than $(1-\zeta/2)\mu$ individuals in region $B$.
    First, we will optimistically assume that no generations
    fail. Under this assumption, we will prove that the conditions of
    Theorem \ref{thm:level-based} hold, leading to an upper bound on
    the expected number of function evaluations $t_0(k)$ until a
    search point in $A_{(k, 1)}$ is created (i.e. a global optimum is
    found). Then in the end we will use a restart argument to account
    for failed generations.

    Let $\gamma_0:=(\zeta/2)(\mu/\lambda)$. In the following arguments,
    we will make consistent use of the important fact that for
    $\gamma\in(0, \gamma_0]$, if there are $\gamma\lambda$ individuals
    in levels $\A{(j, \ell)}$ for some $j\in\{0,\dots,k-1\}$ and $\ell\in[d_j]$,
    then the probability of selecting an individual from $\A{(j, \ell)}$
    is $\gamma\alpha_0$. To see this, we note that individuals in
    $\A{(j, \ell)}$ are guaranteed to be ranked above those in
    $\extsspace\setminus\left(\A{(j, \ell)} \cup B\right)$ in line
    \ref{algo:order} of Algorithm \ref{algo:stepwise-sa}, since
    individuals in
    $\extsspace\setminus\left(\A{(j, \ell)} \cup B\right)$ must have
    fitness either strictly less than $j$, or equal to $j$, and hence
    have mutation rate too low to be contained in $\A{(j,
      \ell)}$. Recalling that $|P_t\cap B|\leq (1-\zeta/2)\mu$, it
    follows that all $\gamma\lambda\leq (\zeta/2)\mu$ individuals of
    $\A{(j, \ell)}$ are among the $\mu$ fittest in the
    population. Therefore, the probability of selecting an individual
    from $\A{(j, \ell)}$ indeed is
    $\gamma\lambda / \mu  = \gamma\alpha_0$.

    We now show that conditions (G1) and (G2) of Theorem
    \ref{thm:level-based} hold for each level $A_{(j,\ell)}$ where
    $j\in[0..k-1]$ and $\ell\in[d_j]$. We assume that the current
    population has at least $\gamma_0\lambda$ individuals in levels
    $\A{(j, \ell)}$. We distinguish between the case $\ell < d_j$,
    i.e., when it suffices to increase the mutation rate to upgrade to
    the next level, and the case $\ell = d_j$, i.e., when it may be
    necessary to increase the fitness to reach the next level.

    \underline{$\ell < d_j$:} To verify condition (G2), we must
    estimate the probability of producing an offspring in levels
    $\A{(j, \ell+1)}$, assuming that there are at least
    $\gamma\lambda$ individuals in levels $\A{(j, \ell+1)}$, for any
    $\gamma\in(0, \gamma_0]$.  To produce an offspring in levels
    $\A{(j, \ell+1)}$, it suffices to first select a parent
    $(x, \chi/n)$ from $\A{(j, \ell+1)}$, and secondly create an
    offspring $(x', \chi'/n)$ in levels $\A{(j, \ell+1)}$. The
    probability of selecting such a parent is at least
    $\gamma\alpha_0$. Assuming that the parent is in level
    $A_{(u,v)}\subseteq \A{(j, \ell+1)}$, and applying Lemma
    \ref{lemma:can-stay} to level $A_{(u, v)}$, the probability that
    the offspring $(x',\chi'/n)$ is in levels
    $\A{(u, v)}\subseteq \A{(j, \ell+1)}$ is $(1+\delta) / \alpha_0$
    for some $\delta\in(0,1)$.  Thus the probability of selecting a
    parent in levels $\A{(j, \ell+1)}$, then producing an offspring in
    levels $\A{(j, \ell+1)}$, is at least
    $\gamma\alpha_0\left(\frac{1+\delta}{\alpha_0}\right) =
    \gamma(1+\delta)$, so condition (G2) is satisfied.

    To verify condition (G1), we estimate the probability of producing
    an offspring in levels $\A{(j, \ell+1)}$. If the parent is in
    levels $\A{(j, \ell+1)}$, then again by Lemma
    \ref{lemma:can-stay}, the offspring is in levels $\A{(j, \ell+1)}$
    with probability at least $(1+\delta) /\alpha_0$.

    On the other hand, if the parent $(x, \chi/n)$ is in level
    $A_{(j, \ell)}$, then we consider the probability of producing an
    offspring $(x', \chi'/n)\in\A{(j, \ell+1)}$ by increasing the
    mutation rate from $\chi$ to $A\chi$, and maintaining the fitness
    $x'\in\A{j}$. By assumption, $\ell<d_j$, so the level-definition
    implies that the parent has mutation rate
    $\chi / n < \theta_1(j)< \eta(j)$. Hence, by
    Lemma~\ref{lemma:r-bounds} (\ref{item:r4}), the probability of
    increasing the mutation parameter to $A\chi$ and maintaining at
    least $j$ leading one-bits is at least
    $\pinc r(j,A\chi) \geq (1+\delta)/\alpha_0.$

    Taking into account that the probability of selecting a parent 
    in $\A{(j, \ell)}$ is at least $\alpha_0\gamma_0$, the probability
    of producing an offspring in $\A{(j, \ell+1)}$ is at least 
    \begin{align}
      \label{eq:low-z-def}
      \gamma_0\alpha_0 \left(\frac{1+\delta}{\alpha_0}\right)
      & = (1+\delta)\gamma_0 =: z_{(j, \ell)}.
    \end{align}

    \underline{$\ell = d_j$:} To show (G2) we assume that there are at
    least $\gamma\lambda$ individuals in levels $\A{(j+1, 1)}$, for
    $\gamma\in(0, \gamma_0]$. We again apply Lemma
    \ref{lemma:can-stay} to show the probability of selecting an
    individual from $\A{(j+1, 1)}$ and producing a new individual also
    in $\A{(j+1, 1)}$ is at least $\gamma(1+\delta)$, showing (G2) is
    satisfied.

    For condition (G1), we only consider parents selected from levels
    $\A{(j+1, 1)}$.  If the parent $(x, \chi/n)$ is in $A_{(j, d_j)}$,
    then by Lemma~\ref{lemma:upgrade-prob} the offspring
    $(x',\chi'/n)$ is in levels $\A{(j+1, 1)}$ with probability at
    least $\Omega(1/j)$. Otherwise, if the parent is already in
    levels $\A{(j+1, 1)}$, then by Lemma \ref{lemma:can-stay}, the
    offspring is in levels $\A{(j+1, 1)}$ with probability at least
    $(1 + \delta) / \alpha_0=\Omega(1)$. In both cases, the
    probability of selecting a parent from $\A{(j, d_j)}$ and
    producing an offspring in levels $\A{(j+1, 1)}$ is at least
    \begin{align}
      \label{eq:edge-z-def}
      \gamma_0\alpha_0 \Omega(1/j) = \Omega(1/j) =: z_{(j,d_j)}.
    \end{align}

    To verify that $\lambda\geq c\ln(n)$ is large enough to satisfy
    condition (G3), we first must calculate $m$, the total number of
    sub-levels. Referring to Definition \ref{eq:depth-def} and using
    $\theta_1(j) < 1/2$, the depth of each level $j$ is no more than
    $d_j< \lceil\log_A\left(\frac{1}{2\ep}\right)\rceil= O(\log(n))$
    for all $j\in\{0,\dots,k-1\}$. Therefore, $m = O(k\log(n))$ and so
    $\lambda \geq c\ln(n)$ satisfies (G3) for $c > 1$ large enough.
    Thus we have found parameters
    $z_{(0, 1)}, z_{(0, 2)}, \dots, z_{(k-1, d_{k-1})}, \delta,$ and
    $\gamma_0,$ such that all three conditions of
    Theorem~\ref{thm:level-based} are satisfied. Assuming no failure,
    the expected time to reach the last level is no more than
    \begin{align*}
      t_0(k) & \leq \left(\frac{8}{\delta^2}\right)
               \sum_{j=0}^{k-1}\sum_{\ell=1}^{d_j}
               \left(
               \lambda\log
               \left(\frac{6\delta\lambda}{4+z_{(j,\ell)}\delta\lambda}\right)
               +\frac{1}{z_{(j,\ell)}}
               \right)\\
             & = \sum_{j=0}^{k-1}\sum_{\ell=1}^{d_j-1} O\left(\lambda\log\left(\frac{1}{z_{(j,\ell)}}\right)+\frac{1}{z_{(j,\ell)}}\right)\\
               &\quad + \sum_{j=0}^{k-1} O\left(\lambda\log(\lambda)+\frac{1}{z_{(j,d_j)}}\right)\\
             & = O(k\lambda\log(n) + k\lambda\log(\lambda) + k^2),
    \end{align*}
    using that $z_{(j,\ell)}=\Omega(1)$ for all $\ell<d_j$,
    and $z_{(j,d_j)}=\Omega(1/j)$.

    Finally, we account for ``failed'' generations where our
    assumption that there are less than $(1-\zeta/2)\mu$ individuals
    in region $B$ does not hold. We refer to a sequence of
    $2t_0(k) / \lambda$ generations as a \textit{phase}, and call a
    phase \emph{good} if for $2t_0(k)/\lambda$ consecutive generations
    there are fewer than $(1-\zeta/2)\mu$ individuals in $B$. By
    Lemma~\ref{lemma:few-bad} and a union bound, a phase is good with
    probability $1-2t_0(k)/\lambda e^{-\Omega(\mu)} = \Omega(1)$, for
    $\mu = \Omega(\log(n))$. By Markov's inequality, the probability
    of reaching a global optimum in a good phase is at least
    $1/2$. Hence, the expected number of phases required, each costing
    $2t_0(k)$ function evaluations, is $O(1)$.
  \end{proof}
  
  \section{Experiments}
  \label{sec:experiments}

  The theoretical analysis of the $(\mu, \lambda)$ self-adaptive EA is
  complemented by some experiments on a wider variety of problems. In
  addition to the standard \textsc{OneMax} function, we consider
  \begin{align*}
      \textsc{Jump}_k(x) &:=
    \begin{cases}
      \textsc{OM}(x) + k & \text{if } \textsc{OM}(x) < n - k\\
      \textsc{OM}(x) - k & \text{if } n - k \leq \textsc{OM}(x) < n\\
      n +1 &\text{if } \textsc{OM}(x) = n.
    \end{cases}\\
    \textsc{SubString}_k(x) &:= \max_{1\leq i\leq n} i \cdot
                              \textstyle\prod_{j=\max\{i-k+1,
                              1\}}^ix_j.
  \end{align*}
  The $\textsc{SubString}_k$ function is similar to the function in
  \cite{bib:Chen2009} of the same name. The value of
  $\textsc{SubString}_k$ is the maximal position of the
  substring $1^k$, if such a substring exists, otherwise it
  is just the number of leading 1-bits. While the function in
  \cite{bib:Chen2009} has a unique global optimum at the point $1^n$,
  all strings of the form $\{0,1\}^{n-k}1^k$ are optimal for our
  $\textsc{SubString}_k$ function.
  
  In a first set of experiments, we examined how Algorithm
  \ref{algo:stepwise-sa} adjusts mutation rates relative to fitness on
  several contrasting fitness landscapes. In each run, we chose the
  parameter settings $\lambda = 8\ln(n)$, $\mu = \lambda / 15$,
  $A = 1.5$, $b=0.7$, and $\pinc = 0.25$, where $\lambda$ and $\mu$
  are rounded to the nearest integer. For the functions
  \textsc{LeadingOnes}, $\textsc{SubString}_{\sqrt{n}}$,
  \textsc{OneMax}, and $\textsc{Jump}_3$, we recorded the fitness and
  mutation rate $\chi/n$ of the top-ranked individual in each
  generation. In a second set of experiments, we compared Algorithm
  \ref{algo:stepwise-sa} to other algorithms on
  $\textsc{LeadingOnes}_k$, $\textsc{SubString}_k$, and a version of
  $\textsc{OneMax}$ in which only an unknown selection of $k$ bits
  contribute to fitness ($\textsc{OneMax}_k$). For these experiments,
  we chose the parameters $\lambda=16\ln(n),$ $\mu= \lambda / 8,$
  $A=1.2,$ $b=0.7,$ $\pinc=0.25$ for Algorithm
  \ref{algo:stepwise-sa}. The change in parameter settings was not
  particularly motivated, although note that both respect the
  conditions imposed by Theorems \ref{thm:k-lo}, since $\pinc = 0.25$
  satisfies $1/16 < 1/15 < 1/4 < 2/5$, and $b = 0.7$ satisfies
  $7/10 < 1/(1 + \sqrt{4/45}) \approx 0.78$.

  The results from the first set of experiments are summarised in
  Fig. \ref{fig:mutvsfitness}. We set $n = 500$ for
  \textsc{LeadingOnes}, $\textsc{SubString}_{\sqrt{n}}$, and
  \textsc{OneMax}, while for $\textsc{Jump}_3$ we set $n = 100$, and
  performed 100 trials for each function. At the beginning of a trial,
  all individuals were given a starting mutation strength of
  $\chi = 1$. For each function, we plotted the median mutation rate
  $\chi/n$ per fitness value in blue, with the 95th percentile shaded
  in grey. Finally, to aid interpretation we plotted in red the
  ``error threshold'', i.e. the value of $\chi/n$ such that the
  expected number of offspring with fitness at least as good as the
  parent's is only 1 \cite{bib:Lehre2010}. For \textsc{LeadingOnes},
  the error threshold is thus approximately the $\theta_2$ function
  introduced in Section \ref{sec:def-sublevels}.

  Fig. \ref{fig:mutvsfitness} shows that Algorithm
  \ref{algo:stepwise-sa} tuned the mutation rate of the top-ranked
  individual very differently depending on the fitness landscape. For
  \textsc{LeadingOnes}, we see the top individual's mutation rate
  quickly rose to a small factor below $\theta_2$, then gradually
  lowered mutation rate as fitness increased. This supports our
  theoretical analysis of $\textsc{LeadingOnes}_k$, in which we argued
  that the mutation rate rises to an ``edge region'' comprising of
  mutation rates just below the error threshold. We found similar
  behaviour for $\textsc{SubString}_{\sqrt{n}}$, where again the
  algorithm quickly rose to a close approximation below the error
  threshold. However, the results for \textsc{OneMax} and
  $\textsc{Jump}_3$ are less conclusive. First, we were unable to
  derive an exact expression for the error threshold for these
  functions, which makes the trajectory of the mutation rates more
  difficult to interpret. Instead we include in green the mutation
  rate for a single individual to maximise the expected difference in
  its fitness before and after mutation, in order to provide some
  context for interpreting the effectiveness of mutation rates. For
  \textsc{OneMax}, it is known this drift-maximising rate is
  $\Theta(1/n)$ when $\textsc{OM}(x) \geq 2n/3$ \cite{bib:Doerr2018d},
  while for $\textsc{Jump}_3$, the ideal rate is $3 / n$ for jumping
  the gap. In terms of the trajectory of mutation rates, on
  \textsc{OneMax} the algorithm correctly increased its mutation rate
  at first, but also seems to have kept mutation rate well above $1/n$
  for much of the search process. This could explain its relative
  inefficiency on $\textsc{OneMax}_k$ in the next set of experiments.
  The behaviour is similar for $\textsc{Jump}_3$, except that mutation
  rate increased toward the ideal rate while at the edge of the gap,
  and occasionally reached even higher values. The tendency for
  mutation rate to dramatically increase during lack of progress is
  reassuring, since a common difficulty in self-adaptation of mutation
  rates is that mutation rates may indefinitely decrease when it is
  difficult to increase fitness \cite{Liang2001}.
  
  \begin{figure*}[t]
    \centering
    \includegraphics[width=1.0\textwidth]{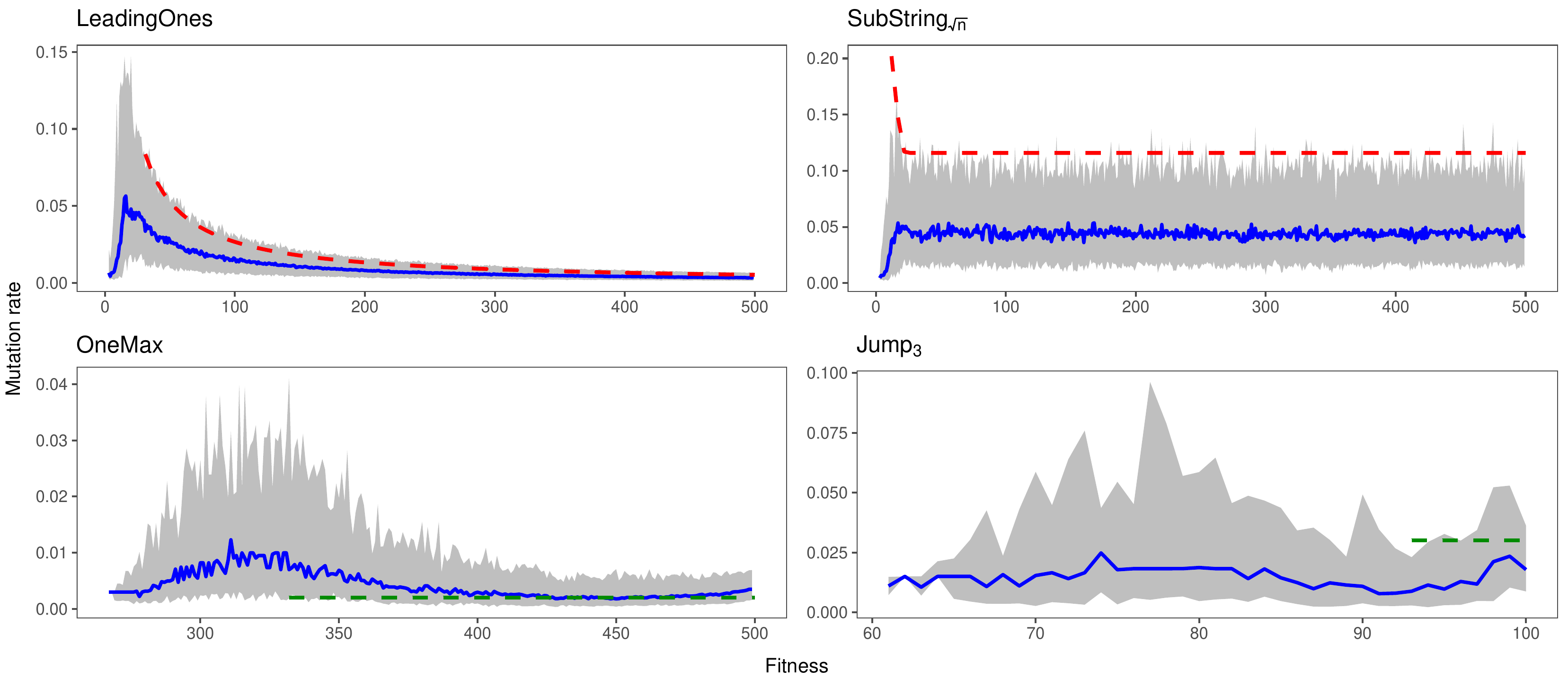}
    \caption{Fitness and mutation rate of the most fit individual per
      generation of Algorithm \ref{algo:stepwise-sa} with
      $\lambda = 8\ln(n)$, $\mu = \lambda / 15$, $A=1.5$, $b=0.7$, and
      $\pinc = 0.25$. Median mutation rate is the blue line, while
      the 95-th percentile is shaded grey. Top: the dashed red line
      indicates the error threshold, past which mutation rates will be
      ineffective for the given fitness. Bottom: the dashed green line
      shows the drift-maximising mutation rate.}
    \label{fig:mutvsfitness}
  \end{figure*}
  
  \begin{figure*}[t]
    \centering
    \includegraphics[width=\linewidth]{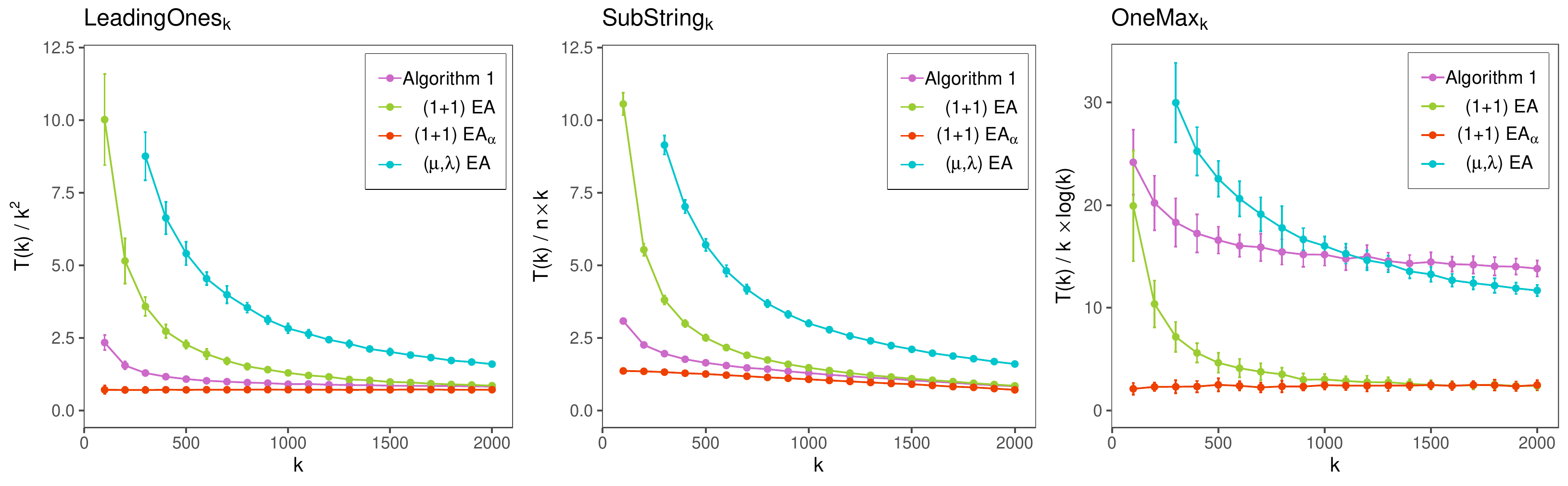}
    \caption{Runtime as a function of $k$ for fixed $n=2000$,
      normalised to show impact of adaptation. Points show the median
      runtime, with error bars extending beyond the interquartile
      range as $\pm 1.5\cdot \text{IQR}$.  Parameter settings:
      Algorithm \ref{algo:stepwise-sa} with
      $\lambda=16\ln(n), \mu=2\ln(n),$ adaptation parameters
      $A=1.2, b=0.7,\pinc=0.25$. $(1+1)$ EA with mutation rate
      $1/n$. $(1+1)$ EA$_\alpha$ with $A=1.2$ and
      $b=0.85$. $(\mu, \lambda)$ EA with
      $\lambda=16\ln(n), \mu=2\ln(n)$ mutation rate $2/(5n)$.}
    \label{fig:fixed-n}
  \end{figure*}

  In the second set of experiments, summarised in Fig.
  \ref{fig:fixed-n}, we compared the self-adaptive EA to the $(1+1)$ EA,
  the $(\mu, \lambda)$ EA, as well as to the $(1+1)_{\alpha}$ EA from
  \cite{bib:Doerr2018} with the parameter settings $A = 1.2$ and
  $b = 0.85$ (for the $(1+1)_{\alpha}$ EA). On each of the functions
  \leadingones, \substring, and $\textsc{OneMax}_k$, we tested the
  algorithms on a range of possible choices for the adversary by
  performing 100 runs of each algorithm for values of $k$ between
  $100$ and $n=2000$. The y-axes in Fig. \ref{fig:fixed-n} show the
  runtime divided by the asymptotic running time of a $(1+1)$ EA which
  knows the value $k$ beforehand. The effect of this rescaling is that
  algorithms which successfully adapt to the parameter $k$ should
  remain relatively constant along the y-axis as $k$ changes.

  On all three functions, the two adaptive algorithms had runtimes
  proportional to an EA which knew $k$ beforehand. However, while both
  also drastically outperformed the static algorithms for smaller $k$
  on \leadingones and \substring, on $\textsc{OneMax}_k$, Algorithm
  \ref{algo:stepwise-sa} performed comparably to the static $(1+1)$ EA
  only for small $k$, and did worse than the $(1+1)$ EA as $k$ grew
  larger. This is somewhat expected, since it is known that the
  $(1+1)$ EA easily outperforms many population-based algorithms on
  \textsc{OneMax}. It is possible that the benefits of adaptation will
  not overcome the penalty of maintaining a population except for much
  larger values of $n$.

\section{Conclusion}
\label{sec:conclusion}

Effective parameter control is one of the central challenges in
evolutionary computation. There is empirical evidence that
self-adaptation -- where parameters are encoded in the chromosome of
individuals -- can be a successful control mechanism in evolutionary
strategies. However, self-adaptation is rarely employed in discrete
EAs~\cite{bib:Back1992,smith_self_1996}.  The theoretical
understanding of self-adaptation is lacking.

This paper demonstrates both theoretically and empirically that
adopting a self-adaptation mechanism in a discrete, non-elitist EA can
lead to significant speedups. We analysed the expected runtime of the
$(\mu, \lambda)$ EA with self-adaptive mutation rates on \leadingones
in the context of an adversarial choice of a hidden problem parameter
$k$ that determines the problem structure. We gave parameter settings
for which the algorithm optimises $\textsc{LeadingOnes}_k$ in expected
time $O(k^2)$, which is asymptotically optimal among any unary
unbiased black box algorithm which knows the hidden value $k$.  This
is a significant speedup compared to, e.g., the (1+1) EA using any
choice of static mutation rate.  In fact, the algorithm even has an
asymptotic speedup compared with the state-of-the art parameter
control mechanism for this problem \cite{bib:Doerr2015}.  Future work
should extend the analysis to more general classes of problems, such
as linear functions and multi-modal problems. We expect that applying
the level-based theorem over a two-dimensional level-structure will
lead to further results about self-adaptive EAs.

\bibliographystyle{abbrv}
%\bibliography{adjustmutation}

\begin{thebibliography}{10}

\bibitem{bib:Back1992}
T.~B\"ack.
\newblock Self-adaptation in genetic algorithms.
\newblock In {\em Proc. of the 1st European Conf on Artificial Life}, pages
  263--271, 1992.

\bibitem{Back1996}
T.~B{\"a}ck and M.~Sch{\"u}tz.
\newblock Intelligent mutation rate control in canonical genetic algorithms.
\newblock In {\em Foundations of Intelligent Systems}, pages 158--167, 1996.

\bibitem{bib:Badkobeh2014}
G.~Badkobeh, P.~K. Lehre, and D.~Sudholt.
\newblock Unbiased black-box complexity of parallel search.
\newblock In {\em Proc. of Parallel Problem Solving from Nature (PPSN '14)},
  pages 892--901, 2014.

\bibitem{bib:Bottcher2010}
S.~B{\"{o}}ttcher, B.~Doerr, and F.~Neumann.
\newblock Optimal fixed and adaptive mutation rates for the leadingones
  problem.
\newblock In {\em Proc. of Parallel Problem Solving from Nature (PPSN '10)},
  pages 1--10, 2010.

\bibitem{bib:Cathabard2011}
S.~Cathabard, P.~Lehre, and X.~Yao.
\newblock Non-uniform mutation rates for problems with unknown solution
  lengths.
\newblock In {\em Proc. of Foundations of Genetic Algorithms (FOGA '11)}, pages
  173--180, 2011.

\bibitem{bib:Chen2009}
T.~Chen, P.~K. Lehre, K.~Tang, and X.~Yao.
\newblock When is an estimation of distribution algorithm better than an
  evolutionary algorithm?
\newblock In {\em Proc. of IEEE Congress on Evolutionary Computation}, pages
  1470--1477, 2009.

\bibitem{bib:Corus2017}
D.~Corus, D.-C. Dang, A.~V. Eremeev, and P.~K. Lehre.
\newblock Level-based analysis of genetic algorithms and other search
  processes.
\newblock {\em IEEE Transactions on Evolutionary Compututation},
  22(5):707--719, 2017.

\bibitem{bib:Dang2016}
D.-C. Dang and P.~K. Lehre.
\newblock Self-adaptation of mutation rates in non-elitist populations.
\newblock In {\em Proc. of Parallel Problem Solving from Nature (PPSN '16)},
  pages 803--813, 2016.

\bibitem{bib:Dang2018}
D.-C. Dang, P.~K. Lehre, and P.~T.~H. Nguyen.
\newblock Level-based analysis of the univariate marginal distribution
  algorithm.
\newblock {\em Algorithmica}, 81(2):668--702, 2019.

\bibitem{DeJong2006}
K.~{De Jong}.
\newblock {A Historical Perspective}.
\newblock In {\em Evolutionary Computation: a Unified Approach}, chapter~2. MIT
  Press, Cambridge, 1st edition, 2006.

\bibitem{bib:Doerr2015c}
B.~Doerr and C.~Doerr.
\newblock Optimal parameter choices through self-adjustment: Applying the
  1/5-th rule in discrete settings.
\newblock In {\em Proc. of Genetic and Evolutionary Computation Conference
  (GECCO '15)}, pages 1335--1342, 2015.

\bibitem{Doerr2018}
B.~Doerr and C.~Doerr.
\newblock Optimal static and self-adjusting parameter choices for the
  (1+($\lambda$, $\lambda$)) genetic algorithm.
\newblock {\em Algorithmica}, 80(5):1658--1709, 2018.

\bibitem{bib:Doerr2018e}
B.~Doerr and C.~Doerr.
\newblock Theory of parameter control for discrete black-box optimization:
  Provable performance gains through dynamic parameter choices.
\newblock {\em CoRR}, abs/1804.05650, 2018.

\bibitem{doerr2015}
B.~Doerr, C.~Doerr, and F.~Ebel.
\newblock From black-box complexity to designing new genetic algorithms.
\newblock {\em Theoretical Computer Science}, 567:87 -- 104, 2015.

\bibitem{bib:Doerr2017}
B.~Doerr, C.~Doerr, and T.~K\"{o}tzing.
\newblock Unknown solution length problems with no asymptotically optimal run
  time.
\newblock In {\em Proc. of Genetic and Evolutionary Computation Conference
  (GECCO '17)}, pages 1367--1374, 2017.

\bibitem{bib:Doerr2015}
B.~Doerr, C.~Doerr, and T.~K{\"{o}}tzing.
\newblock Solving problems with unknown solution length at almost no extra
  cost.
\newblock {\em Algorithmica}, 81(2):703--748, 2019.

\bibitem{Doerr2019}
B.~Doerr, C.~Doerr, and J.~Lengler.
\newblock Self-adjusting mutation rates with provably optimal success rules.
\newblock In {\em Proc. of Genetic and Evolutionary Computation Conference
  (GECCO '19)}, pages 1479--1487, 2019.

\bibitem{bib:Doerr2018d}
B.~Doerr, C.~Doerr, and J.~Yang.
\newblock Optimal parameter choices via precise black-box analysis.
\newblock In {\em Proc. of Genetic and Evolutionary Computation Conference
  (GECCO '16)}, pages 1123--1130, 2018.

\bibitem{bib:Doerr2017b}
B.~Doerr, C.~Gie{\ss}en, C.~Witt, and J.~Yang.
\newblock The (1+{$\lambda$}) evolutionary algorithm with self-adjusting
  mutation rate.
\newblock In {\em Proc. of Genetic and Evolutionary Computation Conference
  (GECCO '17)}, pages 1351--1358, 2017.

\bibitem{bib:Doerr2013}
B.~Doerr, T.~Jansen, D.~Sudholt, C.~Winzen, and C.~Zarges.
\newblock Mutation rate matters even when optimizing monotonic functions.
\newblock {\em Evolutionary Computation}, 21(1):1--27, 2013.

\bibitem{bib:Doerr2018c}
B.~Doerr, C.~Witt, and J.~Yang.
\newblock Runtime analysis for self-adaptive mutation rates.
\newblock In {\em Proc. of Genetic and Evolutionary Computation Conference
  (GECCO '18)}, pages 1475--1482, 2018.

\bibitem{bib:Doerr2018b}
C.~Doerr.
\newblock Complexity theory for discrete black-box optimization heuristics.
\newblock {\em CoRR}, abs/1801.02037, 2018.

\bibitem{bib:Doerr2018}
C.~Doerr and M.~Wagner.
\newblock Simple on-the-fly parameter selection mechanisms for two classical
  discrete black-box optimization benchmark problems.
\newblock In {\em Proc. of Genetic and Evolutionary Computation Conference
  (GECCO '18)}, pages 943--950, 2018.

\bibitem{Droste2001}
S.~Droste, T.~Jansen, and I.~Wegener.
\newblock Dynamic parameter control in simple evolutionary algorithms.
\newblock In {\em Proc. of Foundations of Genetic Algorithms (FOGA '01)}, pages
  275 -- 294, 2001.

\bibitem{bib:Eiben1999}
A.~E. Eiben, R.~Hinterding, and Z.~Michalewicz.
\newblock Parameter control in evolutionary algorithms.
\newblock {\em IEEE Transactions on Evolutionary Computation}, 3(2):124--141,
  1999.

\bibitem{bib:Einarsson2018}
H.~Einarsson, J.~Lengler, M.~M. Gauy, F.~Meier, A.~Mujika, A.~Steger, and
  F.~Weissenberger.
\newblock The linear hidden subset problem for the (1 + 1) ea with scheduled
  and adaptive mutation rates.
\newblock In {\em Proc. of Genetic and Evolutionary Computation Conference
  (GECCO '18)}, pages 1491--1498, 2018.

\bibitem{bib:Karafotias2012}
G.~Karafotias, M.~Hoogendoorn, and A.~E. Eiben.
\newblock Parameter control in evolutionary algorithms: Trends and challenges.
\newblock {\em IEEE Transactions on Evolutionary Computation}, 19(2):167--187,
  2015.

\bibitem{lassig2011}
J.~L\"{a}ssig and D.~Sudholt.
\newblock Adaptive population models for offspring populations and parallel
  evolutionary algorithms.
\newblock In {\em Proc. of Foundations of Genetic Algorithms (FOGA '11)}, pages
  181--192, 2011.

\bibitem{bib:Lehre2010}
P.~K. Lehre.
\newblock Negative drift in populations.
\newblock In {\em Proc. of Parallel Problem Solving from Nature (PPSN '10)},
  pages 244--253, 2010.

\bibitem{bib:Lehre2011}
P.~K. Lehre.
\newblock Fitness-levels for non-elitist populations.
\newblock In {\em Proc. of Genetic and Evolutionary Computation Conference
  (GECCO '11)}, pages 2075--2082, 2011.

\bibitem{bib:Lehre2013}
P.~K. Lehre and E.~\"{O}zcan.
\newblock A runtime analysis of simple hyper-heuristics: To mix or not to mix
  operators.
\newblock In {\em Proc of Foundations of Genetic Algorithms (FOGA '13)}, pages
  97--104, 2013.

\bibitem{lehre_black-box_2012}
P.~K. Lehre and C.~Witt.
\newblock Black-{Box} {Search} by {Unbiased} {Variation}.
\newblock {\em Algorithmica}, 64(4):623--642, 2012.

\bibitem{bib:Lehre2012}
P.~K. Lehre and X.~Yao.
\newblock On the impact of mutation-selection balance on the runtime of
  evolutionary algorithms.
\newblock {\em IEEE Transactions on Evolutionary Computation}, 16(2):225--241,
  2012.

\bibitem{lengler2018}
J.~Lengler.
\newblock A general dichotomy of evolutionary algorithms on monotone functions.
\newblock In {\em Proc. of Parallel Problem Solving from Nature (PPSN '18)},
  pages 3--15, 2018.

\bibitem{Liang2001}
K.-H. Liang, X.~Yao, and C.~Newton.
\newblock Adapting self-adaptive parameters in evolutionary algorithms.
\newblock {\em Applied Intelligence}, 15(3):171--180, 2001.

\bibitem{smith_self_1996}
J.~Smith and T.~C. Fogarty.
\newblock Self adaptation of mutation rates in a steady state genetic
  algorithm.
\newblock In {\em Proc. of {IEEE} {International} {Conference} on
  {Evolutionary} {Computation}}, pages 318--323, 1996.

\bibitem{Wegener2005}
I.~Wegener.
\newblock Simulated annealing beats metropolis in combinatorial optimization.
\newblock In {\em Proc. of International Colloquium on Automata, Languages and
  Programming (ICALP '05)}, pages 589--601, 2005.

\end{thebibliography}

\section{Appendix}
\label{sec:appendix}

\begin{lemma}\label{lemma:inv-bound}
  For all $c>0$ and $j>0$,
    $1-c^{1/j}\geq \ln(1/c)/j.$ 
\end{lemma}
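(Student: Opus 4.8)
The plan is to obtain a lower bound on $1-c^{1/j}$ directly from the elementary inequality $e^{y}\ge 1+y$, applied to the \emph{reciprocal} power rather than to $c^{1/j}$ itself, so that the linear term lands on the side that bounds $1-c^{1/j}$ from below. Writing $c^{-1/j}=\exp\!\bigl(\ln(1/c)/j\bigr)$ and setting $x:=\ln(1/c)/j$, the inequality $e^{x}\ge 1+x$ gives $c^{-1/j}\ge 1+\ln(1/c)/j$. Inverting (both sides positive in the principal regime) bounds $c^{1/j}$ from above, and subtracting from $1$ yields
\[
  1-c^{1/j}\;\ge\;1-\frac{1}{1+\ln(1/c)/j}\;=\;\frac{\ln(1/c)/j}{\,1+\ln(1/c)/j\,}.
\]
This is the natural first estimate, and it already has the correct leading order in $j$.

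Second, I would organise the argument around the sign of $x=\ln(1/c)/j$, which is positive for $c<1$, zero for $c=1$, and negative for $c>1$; the case $c=1$ (where $x=0$ and both sides of the claim coincide) serves as a consistency check on the direction of the estimate. In the regime $c>1$ one must additionally verify that $1+\ln(1/c)/j$ stays positive before inverting, so the sign bookkeeping in the inversion step is the part of the case analysis that needs the most care. Throughout, the role of $j$ is only to scale $\ln(1/c)$, so the entire lemma reduces to a one-variable comparison in $x$ and need only be checked on the relevant range of $x$.

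The main obstacle is tightness: the estimate above carries the extra additive term $\ln(1/c)/j$ in the denominator, and the crux is to argue that this refines to the clean bound $1-c^{1/j}\ge \ln(1/c)/j$ uniformly in $c$ and $j$ without discarding the constant $\ln(1/c)$. This is exactly the constant required when the lemma is invoked to certify the $\Omega(1/j)$ lower bounds in Lemma~\ref{lemma:r-bounds}~(\ref{item:r0b}) and in the upgrade-probability estimate, so the final step must retain $\ln(1/c)$ rather than an arbitrary implicit constant. Closing this gap — ensuring the linear-part approximation of the exponential is on the correct side simultaneously for $c\le 1$ and $c\ge 1$ — is where I expect the real work of the proof to lie.
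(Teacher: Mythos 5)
Your intermediate derivation is correct: from $e^{x}\ge 1+x$ with $x=\ln(1/c)/j$ you get $c^{-1/j}\ge 1+x$, and for $1+x>0$ inverting gives $1-c^{1/j}\ge x/(1+x)$. But the ``gap'' you propose to close in the final step --- upgrading $x/(1+x)$ to the clean bound $x$ --- cannot be closed, because the lemma as printed is false: the inequality is reversed. Applying $\ln y\le y-1$ with $y=c^{1/j}$ gives $\ln\left(c^{1/j}\right)\le c^{1/j}-1$, i.e.\ $1-c^{1/j}\le \ln(1/c)/j$ for \emph{all} $c>0$ and $j>0$, with equality only at $c=1$; concretely, $c=1/2$, $j=1$ yields $1-c^{1/j}=1/2<\ln 2\approx 0.693$. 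Indeed the paper's own one-line proof, $-\tfrac{1}{j}\ln(1/c)=\ln\left(c^{1/j}\right)\le c^{1/j}-1$, rearranges exactly to this $\le$ direction, so the printed statement is a sign slip, not something a cleverer argument could rescue. Your instinct that the linear approximation of the exponential cannot land on the claimed side ``simultaneously for $c\le 1$ and $c\ge 1$'' is correct in the strongest sense: it lands on the claimed side for no $c\neq 1$ (for $c>1$ take $c=e$, $j=1$: $1-e\not\ge -1$).

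The good news is that your weaker bound is exactly what the paper actually needs, so you should stop trying to finish the stated lemma and instead repair it. In both invocations --- statement (\ref{item:r0b}) of Lemma~\ref{lemma:r-bounds} and the estimate at Eq.~(\ref{eq:bchi-bound}) --- the constant $c$ lies in $(0,1)$, is independent of $j$, and only $1-c^{1/j}=\Omega(1/j)$ is required, so an arbitrary constant factor (not the exact constant $\ln(1/c)$) suffices. Your estimate gives, for $j\ge 1$, $1-c^{1/j}\ge \frac{\ln(1/c)/j}{1+\ln(1/c)/j}\ge \frac{\ln(1/c)}{(1+\ln(1/c))\,j}=\Omega(1/j)$; alternatively, applying $\ln y\le y-1$ with $y=c^{-1/j}$ yields $1-c^{1/j}\ge c^{1/j}\ln(1/c)/j\ge c\ln(1/c)/j$ for $c\in(0,1)$ and $j\ge 1$, a corrected lemma with the same one-line flavour as the paper's. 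Either true variant salvages every downstream use of the lemma at the cost of a constant factor.
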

\begin{proof}
  $ -\frac{1}{j}\ln(1/c) = \ln\left(c^{1/j}\right) \leq c^{1/j}-1. $
\end{proof}

\subsection*{Acknowledgements}
The authors would like to thank Dr Duc-Cuong Dang for suggesting
the problem of determining the runtime of a self-adaptive evolutionary
algorithm on the \leadingones problem.

\end{document}